\newenvironment{proof}{\par\noindent{\bf Proof\ }}{\hfill\BlackBox\\[2mm]}
\newtheorem{theorem}{Theorem}
\newtheorem{lemma}[theorem]{Lemma}
\newtheorem{assumption}{Assumption}
\newtheorem{proposition}[theorem]{Proposition}
\newtheorem{remark}{Remark}
\newcommand{\RN}[1]{%
	\textup{\lowercase\expandafter{\it \romannumeral#1}}%
}
\icmltitlerunning{Continuous-Time Flows}
\begin{document}

\twocolumn[
\icmltitle{Continuous-Time Flows for Efficient Inference and Density Estimation}



\icmlsetsymbol{equal}{*}

\begin{icmlauthorlist}
	\icmlauthor{Changyou Chen}{bu}
	\icmlauthor{Chunyuan Li}{duke}
	\icmlauthor{Liqun Chen}{duke}
	\icmlauthor{Wenlin Wang}{duke}
	\icmlauthor{Yunchen Pu}{duke}
	\icmlauthor{Lawrence Carin}{duke}
\end{icmlauthorlist}

\icmlaffiliation{bu}{SUNY at Buffalo}
\icmlaffiliation{duke}{Duke University}

\icmlcorrespondingauthor{Changyou Chen}{cchangyou@gmail.com}

\icmlkeywords{Machine Learning, ICML}

\vskip 0.3in
]



\printAffiliationsAndNotice{}  

\begin{abstract}
	Two fundamental problems in unsupervised learning are efficient inference for latent-variable models and robust density estimation based on large amounts of unlabeled data. Algorithms for the two tasks, such as normalizing flows and generative adversarial networks (GANs), are often developed independently. In this paper, we propose the concept of {\em continuous-time flows} (CTFs), a family of diffusion-based methods that are able to asymptotically approach a target distribution. Distinct from normalizing flows and GANs, CTFs can be adopted to achieve the above two goals in one framework, with theoretical guarantees. Our framework includes distilling knowledge from a CTF for efficient inference, and learning an explicit  energy-based distribution with CTFs for density estimation. Both tasks rely on a new technique for distribution matching within amortized learning. Experiments on various tasks demonstrate promising performance of the proposed CTF framework, compared to related techniques.
\end{abstract}

\section{Introduction}\label{sec:intro}

Efficient inference and robust density estimation are two important goals in unsupervised learning. In fact, they can be unified from the perspective of learning desired target distributions. In inference problems, one targets to learn a tractable distribution for a {\em latent variable} that is close to a given unnormalized distribution ({\it e.g.}, a posterior distribution in a Bayesian model). In density estimation, one tries to learn an unknown {\em data distribution} only based on the samples from it.  It is also helpful to make a distinction between two types of representations for learning distributions: explicit and implicit methods \citep{MohamedL:arxiv17}. Explicit methods provide a prescribed parametric form for the distribution, while implicit methods learn a stochastic procedure to directly generate samples from the unknown distribution.

Existing deep generative models can easily be identified from this taxonomy. For example, the standard variational autoencoder (VAE) \citep{KingmaW:ICLR14,Rezende:ICML14}  is an important example of an {\it explicit inference} method. Within the inference arm (encoder) of a VAE, recent research has focused on improving the accuracy of the approximation to the posterior distribution on latent variables (codes) using normalizing flow (NF) \citep{RezendeM:ICML15}. NF is particularly interesting due to its ability to approximate the posterior distribution arbitrarily well, while maintaining explicit parametric forms. On the other hand, Stein VAE \citep{FengWL:UAI17,PuGHLHC:NIPS17} is an {\it implicit inference} method, as it only learns to draw samples to approximate posteriors, without assuming an explicit form for the distribution..
For density estimation on observed data, the generative adversarial network (GAN) can be regarded as an {\it implicit density estimation} method \citep{RanganathATB:NIPS16,Huszar:arxiv17,MohamedL:arxiv17}, in the sense that one may sample from the distribution (regarded as a representation of the unknown distribution), but an explicit form for the distribution is not estimated.
GAN has recently been augmented by Flow-GAN~\citep{ermonflow} to incorporate a likelihood term for {\it explicit density estimation}.
Further, the  real-valued non-volume preserving (real NVP) transformations algorithm \citep{dinh2017density} was proposed to perform inference within the implicit density estimation framework.

Some aforementioned methods rely on the concept of {\em flows}.
A flow defines a series of transformations for a random variable (RV), such that the distribution of the RV evolves from a simple distribution to a more complex distribution. When the sequence of transformations are indexed on a discrete-time domain ({\it e.g.}, indexed with integers) with a finite number of transformations, this method is referred to as a normalizing flow \citep{RezendeM:ICML15}. 
Various efficient implementations of NFs have been proposed, such as the planar, radial \citep{RezendeM:ICML15}, Householder \citep{TomczakW:arxiv16}, and inverse autoregressive flows \citep{KingmaSW:NIPS16}. One theoretical limitation of existing normalizing flows is that there is no guarantee on the approximation accuracy due to the finite number of transformations. 

By contrast, little work has explored the applicability of continuous-time flows (CTFs) in deep generative models, where a sequence of transformations are indexed on a continuous-time domain ({\it e.g.}, indexed with real numbers). There are at least two reasons encouraging research in this direction: $\RN{1}$) CTFs are more general than traditional normalizing flows in terms of modeling flexibility, due to the intrinsic infinite number of transformations; $\RN{2}$) CTFs are more theoretically grounded, in the sense that they are guaranteed to approach a target distribution asymptotically (details provided in Section~\ref{sec:VB_CTF}). 

In this paper, we propose efficient ways to apply CTFs for the two motivating tasks. Based on the CTF, our framework learns to drawn samples directly from desired distributions ({\it e.g.}, the unknown posterior and data distributions) for both inference and density estimation tasks via amortization. In addition, we are able to learn an explicit form of the unknown data distribution for density estimation\footnote{Although the density is represented as an energy-based distribution with an intractable normalizer.}. The core idea of our framework is the amortized learning, where knowledge in a CTF is distilled sequentially into another neural network (called {\em inference network} in the inference task, and {\em generator} in density estimation). The distillation relies on the distribution matching technique proposed recently via adversarial lerning \cite{LiLCPCHC:NIPS17}. We conduct various experiments on both synthetic ad real datasets, demonstrating excellent performance of the proposed framework, relative to representative approaches.


\section{Preliminaries}


\subsection{Efficient inference and density estimation}
\vspace{-0.2cm}
\paragraph{Efficient inference with normalizing flows}
Consider a probabilistic generative model with observation $\xb\in\mathbb{R}^D$ and latent variable $\zb\in\mathbb{R}^L$ such that $\xb|\zb \sim p_{\thetab}(\xb | \zb)$ with $\zb \sim p(\zb)$. For efficient inference of $\zb$, the VAE \citep{KingmaW:ICLR14} introduces the concept of an inference network (recognition model or encoder), $q_{\phib}(\zb | \xb)$, as a variational distribution in the VB framework. An inference network is typically a stochastic (nonlinear) mapping from the input $\xb$ to the latent $\zb$, with associated parameters $\phib$. For example, one of the simplest inference networks is defined as $q_{\phib}(\zb | \xb) = \mathcal{N}(\zb; \mub_{\phib}(\xb), \mbox{diag}(\sigmab_{\phib}^2(\xb)))$, where the mean function $\mub_{\phib}(\xb)$ and the standard-derivation function $\sigmab_{\phib}(\xb)$ are specified via deep neural networks parameterized by $\phib$. Parameters are learned by minimizing the negative evidence lower bound (ELBO), {\it i.e.}, the KL divergence between $p_{\thetab}(\xb, \zb)$ and $q_{\phib}(\zb | \xb)$: {\small$\mbox{KL}\left(q_{\phib}(\zb | \xb) \| p_{\thetab}(\xb, \zb)\right) \triangleq \mathbb{E}_{q_{\phib}(\zb | \xb)} \left[\log q_{\phib}(\zb | \xb) - \log p_{\thetab}(\xb, \zb)\right]$}, via stochastic gradient descent \citep{Bottou:12}.

One limitation of the VAE framework is that $q_{\phib}(\zb | \xb)$ is often restricted to simple distributions for feasibility, {\it e.g.}, the normal distribution discussed above, and thus the gap between $q_{\phib}(\zb | \xb)$ and $p_{\thetab}(\zb | \xb)$ is typically large for complicated posterior distributions. NF is a recently proposed VB-based technique designed to mitigate this problem \citep{RezendeM:ICML15}. The idea is to augment $\zb$ via a sequence of deterministic invertible transformations $\{\mathcal{T}_k: \mathbb{R}^L \rightarrow \mathbb{R}^L\}_{k=1}^K$, such that: $\zb_0 \sim q_{\phib}(\cdot | \xb), \zb_1 = \mathcal{T}_1(\zb_0), \cdots, \zb_K = \mathcal{T}_K(\zb_{K-1})$.

Note the transformations $\{\mathcal{T}_k\}$ are typically endowed with different parameters, and we absorb them into $\phib$. Because the transformations are deterministic, the distribution of $\zb_K$ can be written as $q(\zb_K) = q_{\phib}(\zb_0 | \xb)\prod_{k=1}^K\left|\mbox{det}\frac{\partial \mathcal{T}_k}{\partial\zb_k}\right|^{-1}$ via the change of variable formula. As a result, the negative ELBO for normalizing flows becomes:\vspace{-0.2cm}
{\small\begin{align}\label{eq:nf_elbo}
&\mbox{KL}\left(q_{\phib}(\zb_K | \xb) \| p_{\thetab}(\xb, \zb)\right) = 
\mathbb{E}_{q_{\phib}(\zb_0 | \xb)} \left[\log q_{\phib}(\zb_0 | \xb)\right] \\
&- \mathbb{E}_{q_{\phib}(\zb_0 | \xb)} \left[\log p_{\thetab}(\xb, \zb_K)\right] - \mathbb{E}_{q_{\phib}(\zb_0 | \xb)}[\sum_{k=1}^K\log |\mbox{det}\frac{\partial \mathcal{T}_k}{\partial \zb_k}|].\nonumber
\end{align}}
Typically, transformations $\mathcal{T}_k$ of a simple parametric form are employed to make the computations tractable \citep{RezendeM:ICML15}. Our method generalizes these discrete-time transformations to continuous-time ones, ensuring convergence of the transformations to a target distribution.
\vspace{-0.3cm}
\paragraph{Related density-estimation methods}
There exist implicit and explicit density-estimation methods. Implicit density models such as GAN provide a flexible way to draw samples directly from unknown data distributions (via a deep neural network (DNN) called a generator with stochastic inputs) without explicitly modeling their density forms; whereas explicit models such as the pixel RNN/CNN \citep{vandenoordKK:ICML16} define and learn explicit forms of the unknown data distributions. This gives the advantage that the likelihood for a test data point can be explicitly evaluated. However, the generation of samples is typically time-consuming due to the sequential generation nature.

Similar to \citet{WangL:ICLRW17}, our CTF-based approach in Section~\ref{sec:macgan} provides an alternative way for this problem, by simultaneously learning an explicit energy-based data distribution (estimated density) and a generator whose generated samples match the learned data distribution. This not only gives us the advantage of explicit density modeling but also provides an efficient way to generate samples. Note that our technique differs from that of  \citet{WangL:ICLRW17} in that distribution matching is adopted to learn an accurate generator, which is a key component in our framework. 
\vspace{-0.2cm}
\subsection{Continuous-time flows}\label{sec:VB_CTF}
\vspace{-0.1cm}
We notice two potential limitations with traditional normalizing flows: $\RN{1}$) given specified transformations $\{\mathcal{T}_k\}$, there is no guarantee that the distribution of $\zb_K$ could exactly match $p_{\thetab}(\xb, \zb)$; $\RN{2}$) the randomness is only introduced in $\zb_0$ (from the inference network), limiting the representation power. We specify CTFs where transformations are indexed by real numbers, thus they could be considered as consisting of infinite transformations. Further, we consider stochastic flows where randomness is injected in a continuous-time manner. In fact, the concept of CTFs (such as the Hamiltonian flow) has been introduced by \citet{RezendeM:ICML15}, without further development on efficient inference.

We consider a flow on $\mathbb{R}^L$, defined as the mapping\footnote{We reuse the notation $\mathcal{T}$ as transformations from the discrete case above for simplicity, and use $\Zb$ instead of $\zb$ (reserved for the discrete-time setting) to denote the random variable in the continuous-time setting.} $\mathcal{T}: \mathbb{R}^L \times \mathbb{R} \rightarrow \mathbb{R}^L$ such that\footnote{Note we define continuous-time flows in terms of latent variable $\Zb$ in order to incorporate it into the setting of inference. However, the same description applies when we define the flow in data space,  which is the setting of density estimation in Section~\ref{sec:macgan}.} we have $\mathcal{T}(\Zb, 0) = \zb$ and $\mathcal{T}(\mathcal{T}(\Zb, t), s) = \mathcal{T}(\Zb, s + t)$, for all $\Zb \in \mathbb{R}^L$ and $s, t \in \mathbb{R}$. A specific form consider here is defined as $\mathcal{T}(\Zb, t) = \Zb_t$, where $\Zb_t$ is driven by a diffusion of the form:
\vspace{-0.2cm}
\begin{align}\label{eq:diffusion}
\mathrm{d}\Zb_t = F(\Zb_t) \mathrm{d}t + V(\Zb_t)\mathrm{d}\mathcal{W}~.
\end{align}\par\vspace{-0.4cm}
Here $F: \mathbb{R}^L\rightarrow\mathbb{R}^L$, $V: \mathbb{R}^{L\times L} \rightarrow \mathbb{R}^L$ are called the drift term and diffusion term, respectively; $\mathcal{W}$ is the standard $L$-dimensional Brownian motion. In the context of inference, we seek to make the stationary distribution of $\Zb_t$ approach $p_{\thetab}(\zb|\xb)$. One solution for this is to set $F(\Zb_t) = \frac{1}{2}\nabla_{\zb}\log p_{\thetab}(\xb, \zb = \Zb_t)$ and $V(\Zb_t) = \mathbf{I}_L$ with $\mathbf{I}_L$ the $L\times L$ identity matrix. The resulting diffusion is called Langevin dynamics \cite{WellingT:ICML11}. Denoting the distribution of $\Zb_t$ as $\rho_t$, it is well known \cite{Risken:FPE89} that $\rho_t$ is characterized by the Fokker-Planck (FP) equation:
\vspace{-0.2cm}
{\small\begin{align}\label{eq:FPE}
\partial_t \rho_t = -\nabla_{\zb}\cdot (\rho_tF(\Zb_t) + \nabla_{\zb}\cdot(\rho_tV(\Zb_t)V^{\top}(\Zb_t)))~,
\end{align}}\par\vspace{-0.3cm}
where $\ab\cdot\bb \triangleq \ab^{\top} \bb$ for vectors $\ab$ and $\bb$.

For simplicity, we consider the flow defined by the Langevin dynamics specified above, though our results generalize to other stochastic flows \cite{DorogovtsevN:SF14}. In fact, CTF has been applied for scalable Bayesian sampling \cite{DingFBCSN:NIPS14,LICCC:AAAI16,ChenCGLC:AISTATS16,LiSCC:CVPR16,ZhangCHC:ICML17}. In this paper, we generalize it by specifying an ELBO under a CTF, which can then be readily solved by a discretized numerical scheme, based on the results from \citet{JordanKO:MA98}. An approximation error bound for the scheme is also derived. We defer proofs of our theoretical results to the Supplementary Material (SM) for conciseness.
\vspace{-0.2cm}
\section{Continuous-Time Flows for Inference}\label{sec:CTF_inference}
\vspace{-0.2cm}
For this task, we adopt the VAE/normalizing-flow framework with an encoder-decoder structure. An important difference is that instead of feeding data to an encoder and sampling a latent representation in the output as in VAE, we concatenate the data with independent noise as input and directly generate output samples\footnote{Such structure can represent much more complex distributions than a parametric form, useful for the follow up procedures. In contrast, we will define an explicit energy-based distribution for the density in density-estimation tasks.}, constituting an {\em implicit} model. These outputs are then driven by the CTF to approach the true posterior distribution. In the following, we first show that directly optimizing the ELBO is infeasible. We then propose an amortized-learning process that sequentially distills the implicit transformations from the CTF an inference network by distribution matching in Section~\ref{sec:amortize}.

\subsection{The ELBO and discretized approximation}\label{sec:VLB}
We first incorporate CTF into the NF framework by writing out the corresponding ELBO. Note that there are two steps in the inference process. First, an initial $\zb_0$ is drawn from the inference network $q_{\phib}(\cdot | \xb)$; second, $\zb_0$ is evolved via a diffusion such as \eqref{eq:diffusion} for time $T$ (via the transformation $\Zb_T = \mathcal{T}(\zb_0, T)$). Consequently, the negative ELBO for CTF can be written as
{\small\begin{align}\label{eq:ct_elbo}
\mathcal{F}(\xb) &= \mathbb{E}_{q_{\phib}(\zb_0 | \xb)}\mathbb{E}_{\rho_T}\left[\log\rho_T - \log p_{\thetab}(\xb, \Zb_T) \right.\nonumber\\
&\left.+ \log\left|\mbox{det}\frac{\partial \Zb_T}{\partial \zb_0}\right|\right] \triangleq \mathbb{E}_{q_{\phib}(\zb_0 | \xb)}\left[\mathcal{F}_1(\xb, \zb_0)\right]~.
\end{align}}
Note the term $\mathcal{F}_1(\xb, \zb_0)$ is intractable to calculate, in that $\RN{1})$ $\rho_T$ does not have an explicit form; $\RN{2})$ the Jacobian $\frac{\partial \Zb_T}{\partial \zb_0}$ is generally infeasible. In the following, we propose an approximate solution for problem $\RN{1})$. Learning by avoiding problem $\RN{2})$ is presented in Section~\ref{sec:amortize} via amortization. 

For problem $\RN{1})$, a reformulation of the results from \citet{JordanKO:MA98} leads to a nice way to approximate $\rho_t$ in Lemma~\ref{lem:variational_fp}. Note in practice we adopt an {\em implicit} method which uses samples to approximate the solution in Lemma~\ref{lem:variational_fp} for feasibility, detailed in \eqref{eq:ctf_sim}.
\begin{lemma}\label{lem:variational_fp}
	Assume that $\log p_{\thetab}(\xb, \zb)\leq C_1$ is infinitely differentiable, and $\|\nabla_{\zb}\log p_{\thetab}(\xb, \zb)\| \leq C_2\left(1 + C_1 - \log p_{\thetab}(\xb, \zb)\right) (\forall \xb, \zb)$ for some constants $\{C_1, C_2\}$. Let $T = hK$  ($h$ is the stepsize and $K$ is the number of transformations), $\rho_0 \triangleq q_{\phib}(\zb_0 | \xb)$, and $\{\tilde{\rho}_k\}_{k=1}^K$ be the solution of the functional optimization problem:
	\vspace{-0.3cm}\begin{align}\label{eq:variationalFP}
	\tilde{\rho}_k = \arg\min_{\rho \in \mathcal{K}}\mbox{KL}\left(\rho \| p_{\thetab}(\xb, \zb)\right) + \frac{1}{2h}W^2_2\left(\tilde{\rho}_{k-1}, \rho\right)~,
	\end{align}
	with $W_2^2\left(\mu_1, \mu_2\right) \triangleq \inf_{p\in \mathcal{P}(\mu_1, \mu_2)}\int \left\|\xb - \yb\right\|_2^2p(\mathrm{d}\xb,\mathrm{d}\yb)$ the square of 2nd-order Wasserstein distance, and $\mathcal{P}(\mu_1, \mu_2)$ the set of joint distributions on $\{\mu_1, \mu_2\}$. $\mathcal{K}$ is the space of distributions with finite 2nd-order moment. Then $\tilde{\rho}_K$ converges to $\rho_T$ in the limit of $h\rightarrow 0$, {\it i.e.}, $\lim_{h\rightarrow 0}\tilde{\rho}_K = \rho_T$, where $\rho_T$ is the solution of the FP equation \eqref{eq:FPE} at time $T$.
\end{lemma}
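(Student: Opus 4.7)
The plan is to identify the recursion \eqref{eq:variationalFP} as the Jordan-Kinderlehrer-Otto (JKO) minimizing-movement scheme for the free-energy functional $\mathcal{E}(\rho) \triangleq \mbox{KL}(\rho \,\|\, p_{\thetab}(\xb,\cdot))$ on the metric space $(\mathcal{K}, W_2)$, and then to invoke the associated convergence theorem to identify the limit curve as the Wasserstein-$2$ gradient flow of $\mathcal{E}$; this gradient flow coincides with the solution $\rho_t$ of the Fokker-Planck equation \eqref{eq:FPE} driven by $F = \tfrac{1}{2}\nabla_{\zb}\log p_{\thetab}(\xb,\cdot)$ and $V = \mathbf{I}_L$, which is the target PDE in the statement.

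Concretely, I would proceed as follows. First, decompose $\mathcal{E}(\rho) = \int \rho \log \rho \, d\zb + \int \Psi \rho \, d\zb$ with potential $\Psi(\zb) \triangleq -\log p_{\thetab}(\xb, \zb)$; the bound $\log p_{\thetab} \leq C_1$ makes $\Psi$ bounded below, and the growth condition $\|\nabla_{\zb}\log p_{\thetab}\| \leq C_2(1 + C_1 - \log p_{\thetab})$ supplies the local-integrability, second-moment, and one-sided Lipschitz-type estimates on $\Psi$ that standard JKO proofs require. Second, at each step $k$ establish existence and uniqueness of the minimizer $\tilde{\rho}_k$ by direct methods: lower semicontinuity of the entropy and potential terms in the narrow topology, strict displacement convexity of the entropy on $\mathcal{K}$, and strict convexity of $\rho \mapsto \tfrac{1}{2h}W_2^2(\tilde{\rho}_{k-1}, \rho)$ along generalized geodesics, together with the lower bounds coming from the growth hypothesis, yield a unique minimizer.

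Third, derive the Euler-Lagrange condition by perturbing $\tilde{\rho}_k$ along push-forwards of the form $(\mbox{id} + \varepsilon \xi)_\# \tilde{\rho}_k$ and differentiating at $\varepsilon = 0$; this produces, for the optimal transport map $T_k$ from $\tilde{\rho}_{k-1}$ to $\tilde{\rho}_k$, a relation linking $\tfrac{1}{h}(\mbox{id} - T_k)$ to $\nabla(\log \tilde{\rho}_k + \Psi)$, i.e.\ a discrete-time analogue of the continuity equation with velocity $\vb = -\nabla(\log \rho + \Psi)$. Fourth, introduce the piecewise-constant interpolant $\rho^h_t \triangleq \tilde{\rho}_k$ for $t \in [kh,(k+1)h)$, invoke the a-priori energy-dissipation estimate $\sum_k W_2^2(\tilde{\rho}_{k-1},\tilde{\rho}_k) \leq 2h(\mathcal{E}(\rho_0) - \inf \mathcal{E})$ together with uniform second-moment bounds, and extract a subsequence converging narrowly (pointwise in $t$) to an absolutely continuous curve $\rho_t$; passing to the limit in the Euler-Lagrange identity against smooth test vector fields produces exactly the weak form of \eqref{eq:FPE}. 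Uniqueness of that PDE with initial datum $q_{\phib}(\zb_0 | \xb)$ under the stated assumptions then forces $\tilde{\rho}_K \to \rho_T$ when $T = hK$ is held fixed and $h \to 0$.

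The main obstacle is this last limit passage: one has to promote the discrete displacement $\tfrac{1}{h}(T_k - \mbox{id})$ to a continuum velocity field $\vb_t = -\nabla \log \rho_t + \nabla \log p_{\thetab}(\xb,\cdot)$ and rigorously verify the convergence of nonlinear quantities (notably $\int |\nabla \Psi|^2 \rho^h_t \, d\zb$ and the entropy dissipation) in a distributional sense. The hypothesis $\|\nabla_{\zb}\log p_{\thetab}\| \leq C_2(1 + C_1 - \log p_{\thetab})$ enters here in an essential way, ruling out super-polynomial growth of the drift and giving the uniform integrability of $\rho^h_t \log \rho^h_t$ needed for the identification of the limit. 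Once this step is carried out, the remainder of the argument parallels the proof of Theorem~5.1 of \citet{JordanKO:MA98} essentially verbatim, which is exactly what the lemma invokes.
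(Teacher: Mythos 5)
Your proposal is correct and follows essentially the same route as the paper: the paper's proof is a short sketch that identifies \eqref{eq:variationalFP} with the scheme of \citet{JordanKO:MA98} (their eq.~13, with $F(p)$ there equal to $\mbox{KL}(\rho\|p_{\thetab}(\xb,\zb))$ here) and then cites their Proposition~4.1 for existence/uniqueness of each $\tilde{\rho}_k$ and Theorem~5.1 for convergence of the interpolated iterates to the Fokker--Planck solution as $h\rightarrow 0$. You simply unpack the internals of that JKO argument (direct method, Euler--Lagrange condition, interpolant, limit passage), and you correctly map the lemma's hypotheses onto JKO's conditions on the potential $\Psi=-\log p_{\thetab}$, so the two proofs are substantively identical.
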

Lemma~\ref{lem:variational_fp} reveals an interesting way to compute $\rho_T$ via a sequence of functional optimization problems. By comparing it with the objective of the traditional NF, which minimizes the KL-divergence between $\rho_K$ and $p_{\thetab}(\xb, \zb)$, at each sub-optimization-problem in Lemma~\ref{lem:variational_fp}, it minimizes the KL-divergence between $\tilde{\rho}_k$ and $p_{\thetab}(\xb, \zb)$, plus a regularization term as the Wasserstein distance between $\tilde{\rho}_{k-1}$ and $\tilde{\rho}_k$. The extra Wasserstein-distance term arises naturally due to the fact that the Langevin diffusion can be explained as a gradient flow whose geometry is equipped with the Wasserstein distance~\citep{Otto:ARMA98}. 

The optimization problem in Lemma~\ref{lem:variational_fp} is, however, difficult to deal with directly. In practice, we instead approximate the discretization in an equivalent way by simulation from the CTF. Starting from $\zb_0$, each $\zb_k$ $(k = 0, \cdots, K-1)$ is fed into a transformation $\mathcal{T}_k$ (specified below), resulting in $\zb_{k+1}$ whose distribution coincides with $\tilde{\rho}_{k+1}$ in Lemma~\ref{lem:variational_fp}. The discretization procedure is illustrated in Figure~\ref{fig:discretized}. We must specify the transformations $\mathcal{T}_k$. For each $k$, let $t = hk$; we can conclude from Lemma~\ref{lem:variational_fp} that $\lim_{h\rightarrow 0}\tilde{\rho}_k = \rho_{t}$. From FP theory, $\rho_t$ is obtained by solving the diffusion \eqref{eq:diffusion} with initial condition $\Zb_0 = \zb_0$. It is thus reasonable to specify the transformation $\mathcal{T}_k$ as the $k$-th step of a numerical integrator for \eqref{eq:diffusion}. Specifically, we specify $\mathcal{T}_k$ stochastically:
\begin{align}\label{eq:ctf_sim}
\zb_k = \mathcal{T}_k(\zb_{k-1}) \triangleq \zb_{k-1} + F(\zb_{k-1}) h + V(\zb_{k-1}) \zetab_{k}~,
\end{align}
where $\zetab_k \sim \mathcal{N}(\mathbf{0}, h\mathbf{I}_L)$ is drawn from an isotropic normal. Note the transformation defined here is stochastic, thus we only get samples from $\tilde{\rho}_K$ at the end. A natural way to approximate $\tilde{\rho}_K$ is to use the empirical sample distribution, {\it i.e.}, $\tilde{\rho}_K \approx \frac{1}{K}\sum_{k=1}^K\delta_{\zb_k} \triangleq \bar{\rho}_T$ with $\delta_{\zb}$ a point mass at $\zb$. Afterwards, $\tilde{\rho}_K$ (thus $\bar{\rho}_T$) will be used to approximate the true $\rho_T$ from \eqref{eq:FPE}. 

\begin{figure}[!htb]
	\vskip -0.1in
	\centering
	\includegraphics[width=\linewidth]{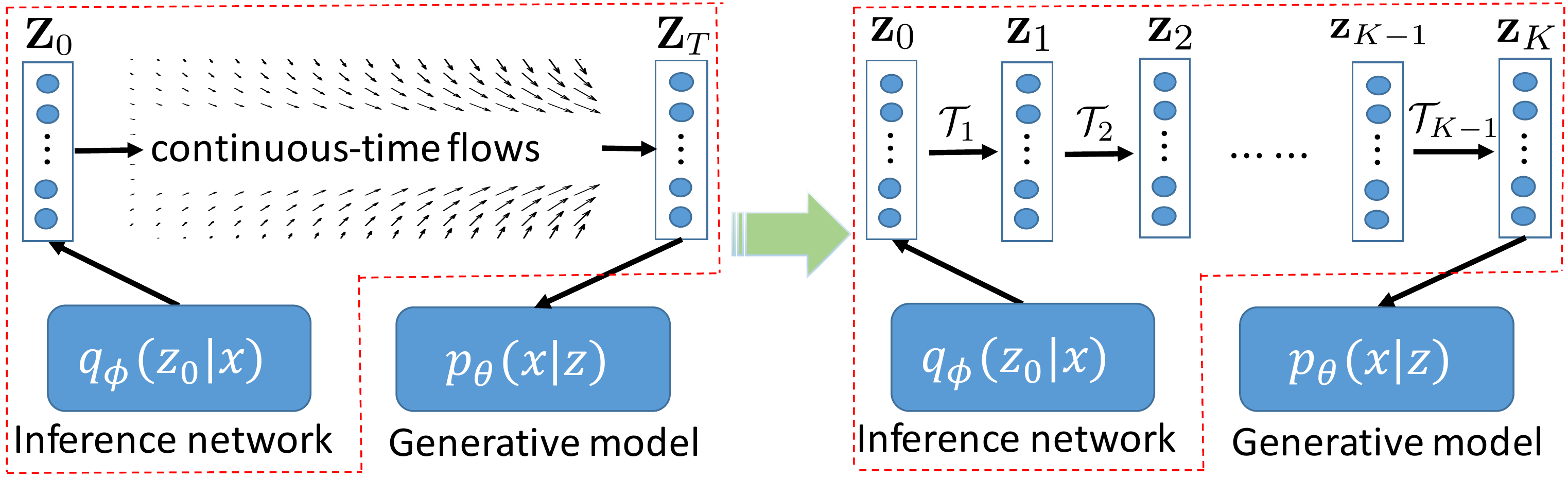}
	\vskip -0.1in
	\caption{Discretized approximation (right) of a continuous-time flow (left). Densities $\{\tilde{\rho}_k\}$ of $\{\zb_k\}$ evolve via transformations $\{\mathcal{T}_k\}$, with $\tilde{\rho}_k \rightarrow \rho_{hk}$ when $h\rightarrow 0$ for each $k$ due to Lemma~\ref{lem:variational_fp}.}
	\label{fig:discretized}
	\vskip -0.1in
\end{figure} 

Better ways to approximate $\rho_T$ might be possible, {\it e.g.}, by assigning more weights to the more recent samples. However, the problem becomes more challenging in theoretical analysis, an interesting point left for future work. In the following, we study how well $\bar{\rho}_T$ approximates $\rho_T$. Following literature on numerical approximation for It\^{o} diffusions \citep{VollmerZT:arxiv15,ChenDC:NIPS15}, we consider a 1-Lipschitz test function $\psi: \mathbb{R}^L \rightarrow \mathbb{R}$, and use the mean square error (MSE) bound to measure the closeness of $\bar{\rho}_T$ and $\rho_T$, defined as: $\mbox{MSE}(\bar{\rho}_T, \rho_T; \psi) \triangleq \mathbb{E}\left(\int \psi(\zb)(\tilde{\rho}_T - \rho_T)(\zb)\mathrm{d}\zb\right)^2$, where the expectation is taken over all the randomness in the construction of $\tilde{\rho}_T$. Note that our goal is related but different from the standard setup as in \cite{VollmerZT:arxiv15,ChenDC:NIPS15}, which studies the closeness of $\bar{\rho}_T$ to $p_{\thetab}(\xb, \zb)$. We need to adopt the assumptions from \citet{VollmerZT:arxiv15,ChenDC:NIPS15}, which are described in the SM. The assumptions are somewhat involved but essentially require coefficients of the diffusion \eqref{eq:diffusion} to be well-behaved. We derive the following bound for the MSE of the sampled approximation, $\bar{\rho}_T$, and the true distribution.

\begin{theorem}\label{theo:mse}
	Under Assumption~\ref{ass:assumption1} in the SM, assume that $\int \rho_T(\zb) p_{\thetab}^{-1}(\xb, \zb)\mathrm{d}\zb < \infty$ and there exists a constant $C$ such that $\frac{\mathrm{d}W_2^2\left(\rho_T, p_{\thetab}(\xb, \zb)\right)}{\mathrm{d} t} \geq C W_2^2\left(\rho_T, p_{\thetab}(\xb, \zb)\right)$, the MSE is bounded as\vspace{-0.2cm}
	{\small\begin{align*}
	\mbox{MSE}(\bar{\rho}_T, \rho_T; \psi) = O\left(\frac{1}{hK} + h^2 + e^{-2ChK}\right)~.
	\end{align*}}
	\vspace{-0.8cm}
\end{theorem}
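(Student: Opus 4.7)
The plan is to split the MSE into two contributions via the triangle inequality, and bound each separately. Writing $\bar{\psi}(\mu) \triangleq \int \psi(\zb)\,\mu(\zb)\,\mathrm{d}\zb$, I would introduce the stationary distribution $p_{\thetab}(\xb,\zb)$ of the Langevin diffusion \eqref{eq:diffusion} as an intermediate quantity, and bound
\begin{align*}
\mbox{MSE}(\bar{\rho}_T,\rho_T;\psi) \;\leq\; 2\,\mathbb{E}\bigl(\bar{\psi}(\bar{\rho}_T) - \bar{\psi}(p_{\thetab})\bigr)^{2} + 2\bigl(\bar{\psi}(\rho_T) - \bar{\psi}(p_{\thetab})\bigr)^{2}.
\end{align*}
The first term is the standard MSE of a time-averaged SG-MCMC estimator targeting $p_{\thetab}(\xb,\zb)$; the second measures how far the law of the true CTF at time $T$ still is from its invariant distribution.

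For the first term, the update \eqref{eq:ctf_sim} is precisely the Euler--Maruyama discretization of the Langevin diffusion with stationary distribution $p_{\thetab}(\xb,\zb)$, and $\bar{\rho}_T$ is the empirical average of its iterates, so I would invoke the MSE bounds of \citet{VollmerZT:arxiv15} and \citet{ChenDC:NIPS15}. Under Assumption~\ref{ass:assumption1} (which controls the solution of the Poisson equation associated with the Langevin generator and $\psi$) these yield
\begin{align*}
\mathbb{E}\bigl(\bar{\psi}(\bar{\rho}_T) - \bar{\psi}(p_{\thetab})\bigr)^{2} \;=\; O\bigl(\tfrac{1}{hK} + h^{2}\bigr),
\end{align*}
accounting for the $\tfrac{1}{hK}$ (trajectory-length variance) and $h^{2}$ (first-order discretization bias) contributions. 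For the second term, I read the hypothesis on $\tfrac{\mathrm{d}}{\mathrm{d}t}W_2^{2}(\rho_t,p_{\thetab})$ as an exponential contraction, giving $W_2(\rho_T,p_{\thetab}) = O(e^{-CT})$. Because $\psi$ is $1$-Lipschitz, Kantorovich--Rubinstein duality yields $|\bar{\psi}(\rho_T)-\bar{\psi}(p_{\thetab})| \leq W_1(\rho_T,p_{\thetab}) \leq W_2(\rho_T,p_{\thetab})$, so squaring contributes the $O(e^{-2CT}) = O(e^{-2ChK})$ term. Summing the two bounds delivers the claimed rate.

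The main obstacle I foresee is verifying that the standard SG-MCMC MSE machinery applies off-the-shelf here: the analysis hinges on the existence of a smooth enough solution $\phi$ to the Poisson equation $\mathcal{L}\phi = \psi - \bar{\psi}(p_{\thetab})$ with controlled derivatives, which is exactly what Assumption~\ref{ass:assumption1} is designed to provide, but I would still need to confirm that each regularity hypothesis transfers to the gradient-based drift $F = \tfrac{1}{2}\nabla_{\zb}\log p_{\thetab}(\xb,\zb)$. A secondary subtlety is the role of the integrability condition $\int \rho_T\,p_{\thetab}^{-1}\,\mathrm{d}\zb<\infty$: I would use it to ensure that $\rho_T$ has finite $\chi^{2}$-divergence with respect to $p_{\thetab}$, which guarantees that the differential inequality on $W_2^{2}$ can be integrated into a genuine (not merely asymptotic) exponential-decay statement and that the initial $W_2$ gap is finite so that the $O(e^{-2CT})$ constant is well defined. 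Everything beyond these two points is essentially triangle inequality and bookkeeping.
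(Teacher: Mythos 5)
Your proposal is correct and follows essentially the same route as the paper's proof: decompose the error through the stationary distribution $p_{\thetab}(\xb,\zb)$, bound the numerical-integrator piece by $O\left(\frac{1}{hK}+h^2\right)$ via the SG-MCMC MSE results of \citet{VollmerZT:arxiv15,ChenDC:NIPS15}, and bound the remaining piece by $W_1^2 \leq W_2^2 = O(e^{-2ChK})$ using the $1$-Lipschitz property, Kantorovich--Rubinstein duality, and the exponential Wasserstein contraction of \citet{BolleyGG:JFA12}. The only cosmetic difference is that you use a factor-of-two triangle inequality where the paper asserts the cross term vanishes exactly; both yield the same $O(\cdot)$ rate.
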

The last assumption in Theorem~\ref{theo:mse} requires $\rho_T$ to evolve fast through the FP equation, which is a standard assumption used to establish convergence to equilibrium for FP equations \citep{BolleyGG:JFA12}. The MSE bound consists of three terms, the first two terms come from numerical approximation of the continuous-time diffusion, whereas the third term comes from the convergence bound of the FP equation in terms of the Wasserstein distance \citep{BolleyGG:JFA12}. When the time $T = hK$ is large enough, the third term may be ignored due to its exponential-decay rate. Moreover, in the infinite-time limit, the bound endows a bias proportional to $h$; this, however, can be removed by adopting a decreasing-step-size scheme in the numerical method, as in standard stochastic gradient MCMC methods \citep{TehTV:arxiv14,ChenDC:NIPS15}.

\begin{figure*}[!htb]
	\centering
	\includegraphics[width=0.8\linewidth]{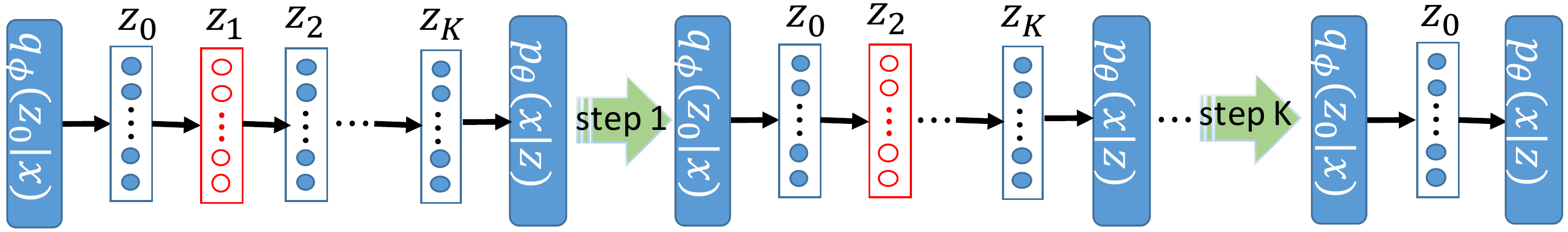}
	\vskip -0.05in
	\caption{Amortized learning of CTFs. From left to right: the initial architecture with $K$-step transformations;  For each step $k$, $q_{\phib}(\cdot)$ is trained to match the distributin of $\zb_k$ in CTFs; In the end, the CTF is distilled into $q_{\phib}(\cdot)$ with distribution matching as in \eqref{eq:update_phi}.}
	\label{fig:amortize}
	\vskip -0.15in
\end{figure*} 
\begin{remark}
	To examine the optimal bound in Theorem~\ref{theo:mse}, we drop out the term $e^{-2ChK}$ in the long-time case (when $hK$ is large enough) for simplicity because it is in a much lower order term than the other terms. The optimal MSE bound (over $h$) decreases at a rate of $O\left(K^{-2/3}\right)$, meaning that $O\left(\epsilon^{-3/2}\right)$ steps of transformations in Figure~\ref{fig:discretized} (right) are needed to reach an $\epsilon$-accurate approximation, {\it i.e.}, $\mbox{MSE} \leq \epsilon$. This is computationally expensive. An efficient way for inference is thus imperative, developed in the next section.
\end{remark}

\subsection{Efficient inference via amortization}\label{sec:amortize}
Even though we approximate $\rho_T$ with $\bar{\rho}_T$, it is still {\em infeasible} to directly apply it to the ELBO in \eqref{eq:ct_elbo} as $\bar{\rho}_T$ is discrete. To deal with this problem, we adopt the idea of ``amortized learning'' \citep{GershmanG:IACCSS14} for inference, by alternatively optimizing the two sets of parameters $\phib$ and $\thetab$.
\paragraph{Updating $\phib$}
To explain the idea, first note that the negative ELBO can be equivalently written as\vspace{-0.2cm}
\begin{align}\label{eq:elbo1}
\mathcal{F}(\xb) = \mathbb{E}_{\rho_0\triangleq q_{\phib}(\zb_0 | \xb)}\mathbb{E}_{\rho_T}\left[\log\rho_0 - \log p_{\thetab}(\xb, \Zb_T)\right]~.
\end{align}
When $\rho_0 = \rho_T$, it is easy to see that: $\mathcal{F}(\xb) = \mathbb{E}_{\rho_0}\left[\log\rho_0 - \log p_{\thetab}(\Zb_T|\xb)\right] + \log p(\xb) = \log p(\xb)$,
which essentially makes the gap between $q_{\phib}(\zb_0 | \xb)$ and $p_{\thetab}(\Zb_T|\xb)$ vanished. As a result, our goal is to learn $\phib$ such that $q_{\phib}(\zb_0 | \xb)$ approaches $p_{\thetab}(\Zb_T|\xb)$. This is a distribution matching problem \cite{LiLCPCHC:NIPS17}. As mentioned previously, we will learn an implicit distribution of $q_{\phib}(\zb_0|\xb)$ ({\it i.e.}, learn how to draw samples from $q_{\phib}(\zb_0|\xb)$ instead of its explicit form), as it allows us to chose a candidate distribution from a much larger distribution space, compared to explicitly defining $q_{\phib}$\footnote{This is distinct from our density-estimation framework described in the next section, where an explicit form is assumed at the beginning for practical needs.}. Consequently, $q_{\phib}(\zb_0|\xb)$ is implemented by a stochastic generator (a DNN parameterized by $\phib$) $Q_{\phib}(\xb, \omega)$ with input as the concatenation of $\xb$ and $\omega$, where $\omega$ is a sample from an isotropic Gaussian distribution $q_0(\omega)$. Our goal is now translated to update the parameter $\phib$ of $Q_{\phib}(\xb, \omega)$ to $\phib^\prime$ such that the distribution of $\{\zb_0^\prime = Q_{\phib^\prime}(\xb, \omega)\}$ with $\omega \sim q_0(\omega)$ matches that of $\zb_1$ in the original generating process with $\phib$ in Figure~\ref{fig:discretized}. In this way, the generating process of $\zb_1$ via $\mathcal{T}_1$ is {\em distilled} into the parameterized generator $Q_{\phib}(\cdot)$, eliminating the need to do a specific transformation via $\mathcal{T}_1$ in testing, and thus is very efficient. Specifically, we update $\phib^\prime$ such that
\vspace{-0.2cm}
\begin{align}\label{eq:update_phi}
\phib^\prime = \arg\min_{\phib}\mathcal{D}(\{\zb_0^{\prime (i)}\}, \{\zb_1^{(i)}\})~,
\end{align}\par \vspace{-0.5cm}
where $\{\zb_0^{\prime (i)}\}_{i=1}^S$ are a set of samples generated from $q_{\phib^\prime}(\zb_0^\prime|\xb)$ via $Q_{\phi}(\cdot)$, and $\{\zb_1^{(i)}\}_{i=1}^S$ are samples drawn by $\omega^i \sim q_0(\omega), \tilde{\zb}_0^i = Q_{\phib}(\xb, \omega^i), \zb_1^{(i)} \sim \mathcal{T}_1(\tilde{\zb}_0^i)$; $\mathcal{D}(\cdot, \cdot)$ is a metric between samples specified below. We call this procedure distilling knowledge from $\mathcal{T}_1$ to $Q_{\phib}(\cdot)$. In practice, one can choose to distill knowledge for several steps ({\it e.g.}, $\mathcal{T}_k$) instead of one step ({\it e.g.}, $\mathcal{T}_1$) to $Q_{\phib}(\cdot)$ each time. Note the distillation idea is related to Bayesian dark knowledge \cite{KorattikaraRMW:NIPS15}, but with different goal and approach.

After distilling knowledge from $\mathcal{T}_1$, we apply the same procedure for other transformations $\mathcal{T}_k  (k > 1)$ sequentially. The final inference network, represented by $q_{\phib}(\cdot | \xb)$, can then well approximate the CTF, {\it e.g.}, the distribution of $\zb_0 \sim q_{\phib}(\cdot | \xb)$ is close to $\rho_T$ from the CTF. This concept is illustrated in Figure~\ref{fig:amortize}. 
We note choosing an appropriate $\mathcal{D}$ in \eqref{eq:update_phi} is important in order to make Theorem~\ref{fig:amortize} applicable. Amortized SVGD \cite{WangL:ICLRW17} defines $\mathcal{D}$ as standard Euclidean distance between samples. We show in Proposition~\ref{prop:D_euc} that this would induce a large error in terms of approximation accuracy.
\begin{proposition}\label{prop:D_euc}
	Fix $\thetab$. If $D$ in \eqref{eq:update_phi} is defined as the summation of pairwise Euclidean distance between samples, then samples generated from $Q_{\phib}$ converge to local modes of $\log p_{\thetab}(\zb|\xb)$.
\end{proposition}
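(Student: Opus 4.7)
The plan is to characterize the fixed points of the sequential update \eqref{eq:update_phi} by taking expectation over the injected Brownian noise and then examining the first-order optimality condition. The upshot will be that each fixed point must place every generated sample at a critical point of $\log p_{\thetab}(\zb | \xb)$, and a gradient-ascent argument on the implied discrete flow singles out local maxima among those critical points.

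Concretely, take $D(\{\zb_0^{\prime(i)}\}, \{\zb_1^{(i)}\}) = \sum_i \|\zb_0^{\prime(i)} - \zb_1^{(i)}\|_2^2$ (the reasoning is identical for the unsquared or fully pairwise variants after relabelling). Substituting the one-step Langevin transformation from \eqref{eq:ctf_sim} and writing $\zb_0^{(i)} = Q_{\phib}(\xb,\omega^{(i)})$, $\zb_0^{\prime(i)} = Q_{\phib^\prime}(\xb,\omega^{(i)})$, the loss becomes
\begin{align*}
\sum_i \bigl\| Q_{\phib^\prime}(\xb,\omega^{(i)}) - Q_{\phib}(\xb,\omega^{(i)}) - F(\zb_0^{(i)})\, h - V(\zb_0^{(i)})\,\zetab^{(i)} \bigr\|_2^2.
\end{align*}
Since $\zetab^{(i)}\sim\mathcal{N}(\zero, h\Ib_L)$ is independent of $\phib^\prime$, taking expectation over the noise eliminates the cross term and leaves
\begin{align*}
J(\phib^\prime) = \sum_i \bigl\| Q_{\phib^\prime}(\xb,\omega^{(i)}) - Q_{\phib}(\xb,\omega^{(i)}) - F(\zb_0^{(i)})\,h \bigr\|_2^2 + \mathrm{const}.
\end{align*}
At a stationary fixed point $\phib^\prime = \phib$ of the iteration, the first-order condition $\nabla_{\phib^\prime} J|_{\phib^\prime=\phib}=0$ reduces to $\sum_i F(\zb_0^{(i)})\cdot \nabla_{\phib} Q_{\phib}(\xb,\omega^{(i)}) = 0$. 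Under standard expressivity of $Q_{\phib}$ (so that the Jacobians at distinct $\omega^{(i)}$ can independently move samples) together with enough noise draws to cover the support of $\omega$, this forces $F(\zb_0^{(i)}) = 0$ for every $i$. Because $F(\zb) = \tfrac{1}{2}\nabla_\zb \log p_{\thetab}(\xb,\zb)$, every produced sample is a critical point of $\log p_{\thetab}(\zb | \xb)$.

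To rule out saddles and minima, observe that away from a fixed point the minimization of $J$ drives $Q_{\phib^\prime}(\xb,\omega^{(i)})$ toward $Q_{\phib}(\xb,\omega^{(i)}) + F(\zb_0^{(i)})\,h$, which is precisely one step of deterministic gradient ascent on $\log p_{\thetab}(\xb,\zb)$ with step size $h/2$. Unrolling the iteration, the generator outputs evolve under a discrete gradient flow whose stable attractors are exactly the local maxima of $\log p_{\thetab}(\zb | \xb)$, i.e., the local modes of the target posterior. The main technical obstacle is the expressivity hypothesis on $Q_{\phib}$ underpinning the passage from the stationarity equation to pointwise $F=0$; without it one recovers only a Jacobian-weighted average of the $F(\zb_0^{(i)})$, which biases samples toward high-density but not necessarily modal regions. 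A secondary issue is that exactness of the noise averaging holds only in the large-sample limit, so in practice one should view the argument as describing the expected dynamics, with Monte Carlo fluctuations vanishing as the number of $\zetab^{(i)}$ draws grows.
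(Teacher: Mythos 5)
Your proposal is correct and follows essentially the same route as the paper's proof: substitute the one-step Langevin map, tie the generator's old and new samples to the same noise seed $\omega^{(i)}$, average out the injected Gaussian noise, and observe that the residual objective is (up to a constant) the squared norm of $\nabla_{\zb}\log p_{\thetab}(\xb,\zb)$ at the generated points, which vanishes exactly at critical points of the posterior. The only differences are minor: you make the fixed-point first-order-optimality step and the expressivity assumption on $Q_{\phib}$ explicit (the paper silently assumes both and jumps straight to "the minimum corresponds to $\nabla_{\zb}\log p_{\thetab}=0$"), while the paper additionally treats the case where the two sample sets use independent noise seeds, bounding the objective to show the generator then collapses onto a single mode.
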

\vspace{-0.2cm}
Consequently, it is crucial to impose more general distance for $\mathcal{D}$. As GAN has been interpreted as distribution matching \cite{LiLCPCHC:NIPS17}, we define $\mathcal{D}$ using the Wasserstein distance, implemented as a discriminator parameterized by a neural network. Specifically, we adopt the ALICE framework \cite{LiLCPCHC:NIPS17}, and use $\{(\xb, \zb_0^{\prime (i)})\}$ as fake data and $\{(\zb_1^{(i)}, \xb^i\sim p_{\thetab}(\cdot|\zb_1^{(i)}))\}$ as real data to train a discriminator. More details are discussed in Section~\ref{supp:discriminator} of the SM.
\paragraph{Updating $\thetab$}
Given $\phib$, $\thetab$ can be updated by simply optimizing the ELBO in \eqref{eq:elbo1}, where $\rho_T$ is approximated by $\bar{\rho}_T$ from the discretized CTF. Specifically, the expectation w.r.t.\! $\rho_T$ in \eqref{eq:elbo1} is approximated by a sample average from:
\vspace{-0.1cm}
{\small\begin{align*}
\zb_0 \sim q_{\phib}(\zb_0 | \xb), \zb_1 \sim \mathcal{T}_1(\zb_0), \zb_2 \sim \mathcal{T}_2(\zb_1), \cdots, \zb_K\sim\mathcal{T}_K(\zb_{K-1})
\end{align*}}
To sum up, there are three steps to learn a CTF-based VAE:
\begin{itemize}\vspace{-0.3cm}
	\item[1.] Generate $(\zb_0, \cdots, \zb_K)$ according to $q_{\phib}(\zb_0 | \xb)$ and the discretized flow with transformations $\{\mathcal{T}_k\}$; \vspace{-0.2cm}
	\item[2.] Update $\phib$ according to \eqref{eq:update_phi};\vspace{-0.2cm}
	\item[3.] Optimize $\thetab$ by minimizing the ELBO \eqref{eq:elbo1} with the generated sample path.\vspace{-0.3cm}
\end{itemize}	
In testing, we use only the finally learned $q_{\phib}(\zb_0 | \xb)$ for inference (into which the CTF has been distilled), and hence testing is like the standard VAE. Since the discretized-CTF model is essentially a Markov chain, we call our model Markov-chain-based VAE (MacVAE).

\section{CTFs for Energy-based Density Estimation}\label{sec:macgan}

We assume that the density of the observation $\xb$ is characterized by a parametric Gibbsian-style probability model $p_{\thetab}(\xb) = \frac{1}{\mathcal{Z}(\thetab)}\tilde{p}_{\thetab}(\xb) \triangleq \frac{1}{\mathcal{Z}(\thetab)}e^{U(\xb; \thetab)}$, where $\tilde{p}_{\thetab}(\xb)$ is an unnormalized version of $p_{\thetab}(\xb)$ with parameter $\thetab$, $U(\xb; \thetab) \triangleq \log \tilde{p}_{\thetab}(\xb)$ is called the energy function \citep{ZhaoML:ICLR17}, and $\mathcal{Z}(\thetab) \triangleq \int \tilde{p}_{\thetab}(\xb) \mathrm{d}\xb$ is the normalizer. Note this form of distributions constitutes a very large class of distributions as long as the capacity of the energy function is large enough. This can be easily achieved by adopting a DNN to implement $U(\xb; \thetab)$, the setting we considered in this paper. Note our model can be placed in between existing implicit and explicit density estimation methods, because we model the data density with an explicit distribution form up to an intractable normalizer. Such distributions have been proved to be useful in real applications, {\it e.g.}, \cite{HaarnojaAL:ICML17} used them to model policies in deep reinforcement learning.

Our goal is to learn $\thetab$ given $\{\xb_i\}_{i=1}^N$, which can be achieved via standard maximum likelihood estimation (MLE): 
$\thetab = \arg\max_{\thetab}\sum_{i=1}^N \log p_{\thetab}(\xb_i) \triangleq \arg\max_{\thetab}\mathcal{M}(\{\xb_i\}; \thetab)$. 

The optimization can be achieved by standard stochastic gradient descent (SGD), with the following gradient formula:
{\small\begin{align}\label{eq:energy_grad}
\frac{\partial \mathcal{M}}{\partial \thetab} = \frac{1}{N}\sum_{i=1}^N\frac{\partial U(\xb_i; \thetab)}{\partial \thetab} - \mathbb{E}_{p_{\thetab}(\xb)}\left[\frac{\partial U(\xb; \thetab)}{\partial \thetab}\right]
\end{align}}
The above formula requires an integration over the model distribution $p_{\thetab}(\xb)$, which can be approximated by Monte Carlo integration with samples. Here we adopt the idea of CTFs and propose to use a DNN guided by a CTF, which we call a {\em generator}, to generate approximate samples from the original model $p_{\thetab}(\xb)$. Specifically, we require that samples from the generator should well approximate the target $p_{\thetab}(\xb)$. This can be done by adopting the CTF idea above, {\it i.e.}, distilling knowledge of a CTF (which approaches $p_{\thetab}(\xb)$) to the generator. In testing, instead of generating samples from $p_{\thetab}(\xb)$ via MCMC (which is complicated and time consuming), we generate samples from the generator directly. Furthermore, when evaluating the likelihood for test data, the constant $\mathcal{Z}(\thetab)$ can also be approximated by Monte Carlo integration with samples drawn from the generator. 

Note the first term on the RHS of \eqref{eq:energy_grad} is a model fit to observed data, and the second term is a model fit to synthetic data drawn from $p_{\thetab(\xb)}$; this is similar to the discriminator in GANs \citep{ArjovskyCB:arxiv17}, but derived directly from the MLE. More connections are discussed below.
\subsection{Learning via Amortization}
\begin{figure}
	\centering
	\includegraphics[width=0.7\linewidth]{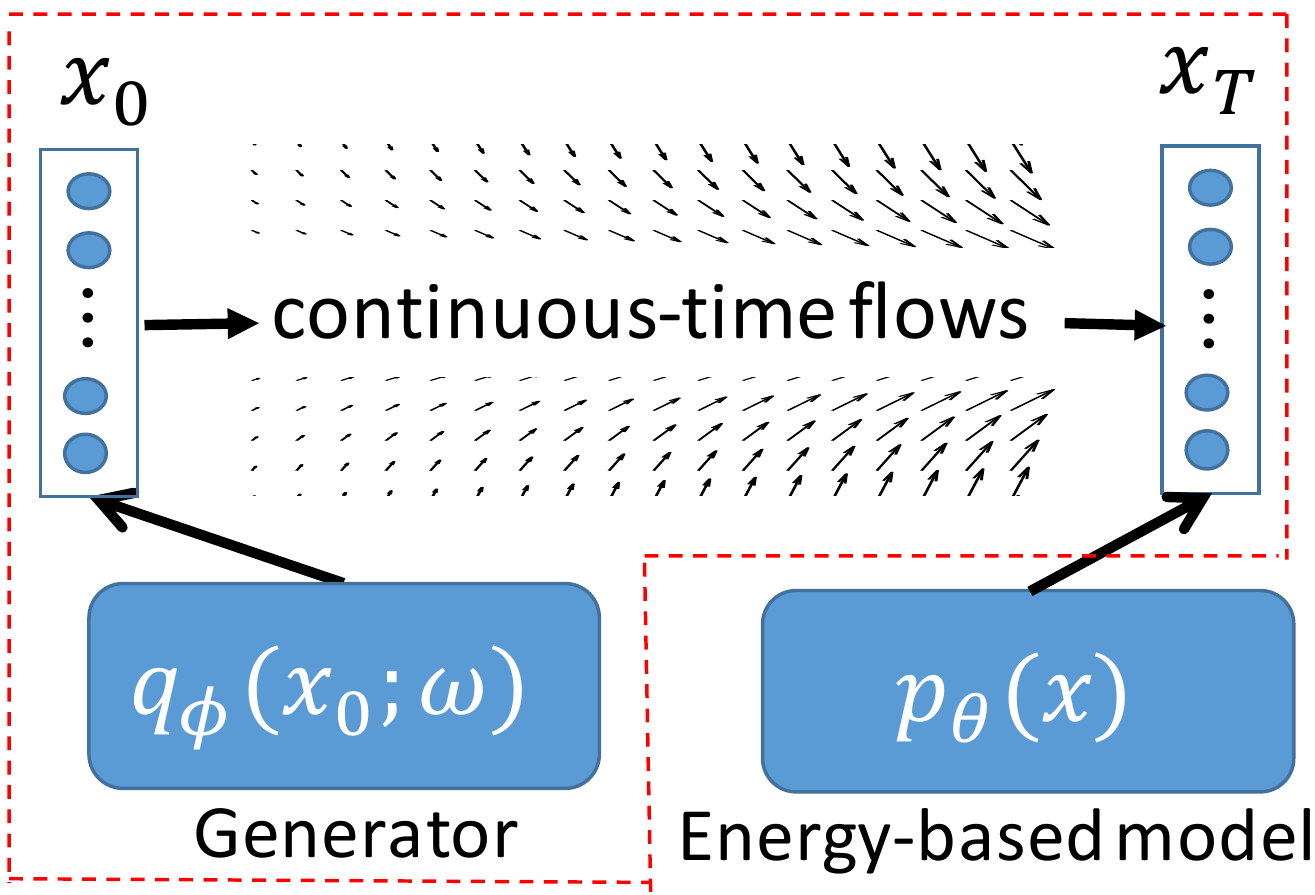}
	\caption{Learning a generator with CTF. The goal is to match the samples $\xb_0$ from $q_{\phib}$ to those after a CTF ($\xb_T$).}
	\label{fig:dgm}
	\vskip -0.25in
\end{figure}
Our goal is to learn a generator whose generated samples match those from the original model $p_{\thetab}(\xb)$.
Similar to inference setting, the generator is learned implicitly. However, we also learn an explicit density model for the data by SGD, with samples from the implicit generator to estimate gradients in \eqref{eq:energy_grad}. Note that in this case, the CTF is performed directly on the data space, instead of on latent-variable space as in previous sections. Specifically, the sampling procedure from the generator plus a CTF are written as: $$\xb_0 \sim q_{\phib}(\xb_0), \xb_T \sim \mathcal{T}(\xb_0, T)~.$$ 
Here $\mathcal{T}(\cdot, \cdot)$ is the continuous-time flow; a sample $\xb_0$ from $q_{\phib}(\cdot)$ is implemented by a deep neural network (generator) $G_{\phib}(\omega)$ with input $\omega \sim q_0(\omega)$,
where $q_0$ is a simple distribution for a noise random variable, {\it e.g.}, the isotropic normal distribution. The procedure is illustrated in Figure~\ref{fig:dgm}. 

Specifically, denote the parameters in the $k$-th step with subscript ``$(k)$''. For efficient sample generation, in the $k$-th step, we again adopt the amortization idea from Section~\ref{sec:amortize} to update $\phib^{(k-1)}$ of the generator network $G_{\phib}(\cdot)$, such that samples from the updated generator match those from the current generator followed by a one-step transformation $\mathcal{T}_1(\cdot)$. After that, $\thetab$ is updated by drawing samples from $q_{\phib}(\cdot)$ to estimate the expectation in \eqref{eq:energy_grad}. The algorithm is presented in Algorithm~\ref{alg:GM} in Section~\ref{supp:DS_CTF} of the SM.

\subsection{Connections to WGAN and MLE}
There is an interesting relation between our model and the WGAN framework \citep{ArjovskyCB:arxiv17}. To see this, let $p_r$ be the data distribution. Substituting $p_{\thetab}(\xb)$ with $q_{\phib}(\xb)$ for the expectation in the gradient formula \eqref{eq:energy_grad} and integrating out $\thetab$, we have that our objective is
{\small\begin{align}\label{eq:macganobj}
	\hspace{-0.2cm}\max \mathbb{E}_{\xb \sim p_r}\left[U(\xb; \thetab)\right] - \mathbb{E}_{\xb \sim q_{\phib}}\left[U(\xb; \thetab)\right]
	\end{align}}
This is an instance of the integral probability metrics \citep{ArjovskyB:ICLR17}. When $U$ is chosen to be 1-Lipschitz functions, it recovers WGAN. This connection motivates us to introduce weight clipping \citep{ArjovskyCB:arxiv17} or alternative regularizers \citep{GulrajaniAADC:arxiv17} when updating $\thetab$ for a better theoretical property. For this reason, we call our model Markov-chain-based GAN (MacGAN). 

Furthermore, it can be shown by Jensen's inequality that the MLE is bounded by (detailed derivations are provided in Section~\ref{app:connection} of the SM)
\vspace{-0.2cm}
{\small\begin{align}\label{eq:lowerbound}
&\max \frac{1}{N}\sum_{i=1}^N \log p_{\thetab}(\xb_i) \\
&\leq \max \mathbb{E}_{\xb \sim p_r}\left[U(\xb; \thetab)\right] - \mathbb{E}_{\xb \sim q_{\phib}} \left[U(\xb; \thetab) \right] - \mathbb{E}_{\xb \sim q_{\phib}} \left[\log q_{\phib}\right]~.\nonumber
\end{align}}
By inspecting \eqref{eq:macganobj} and \eqref{eq:lowerbound}, it is clear that: $\RN{1}$) when learning the energy-based model parameters $\thetab$, the objective can be interpreted as maximizing an upper bound of the MLE shown in \eqref{eq:lowerbound}; $\RN{2}$) when optimizing the parameter $\phib$ of the inference network, we adopt the amortized learning procedure presented in Algorithm~\ref{alg:GM}, whose objective is $\min_{\phib} \mbox{KL}\left(q_{\phib}\|p_{\thetab}\right)$, coinciding with the last two terms in \eqref{eq:lowerbound}. In other words, both $\thetab$ and $\phib$ are optimized by maximizing the {\em same} upper bound of the MLE, guaranteeing convergence of the algorithm, although previous work has pointed out maximizing an upper bound is not a well-posed problem in general \cite{SalakhutdinovH:NC12}. 
\begin{proposition}\label{prop:macgan}
	The optimal solution of MacGAN is the maximum likelihood estimator.
\end{proposition}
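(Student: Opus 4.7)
The plan is to show that at any joint optimum $(\thetab^\star,\phib^\star)$ of MacGAN, the inequality in \eqref{eq:lowerbound} is tight and its right-hand side collapses to the empirical log-likelihood, so that $\thetab^\star$ must solve the MLE problem. I would split the argument into a short algebraic identity, a fixed-point characterization of the $\phib$-update, and a final substitution.

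First I would make the Jensen slack in \eqref{eq:lowerbound} explicit. Using $\log p_{\thetab}(\xb)=U(\xb;\thetab)-\log\mathcal{Z}(\thetab)$ together with the expansion $\mbox{KL}(q_{\phib}\|p_{\thetab})=\mathbb{E}_{q_{\phib}}[\log q_{\phib}]-\mathbb{E}_{q_{\phib}}[U(\xb;\thetab)]+\log\mathcal{Z}(\thetab)$, a direct rearrangement yields, for every $(\thetab,\phib)$,
\[
\mathbb{E}_{p_r}[U(\xb;\thetab)]-\mathbb{E}_{q_{\phib}}[U(\xb;\thetab)]+\mathbb{E}_{q_{\phib}}[\log q_{\phib}]=\mathbb{E}_{p_r}[\log p_{\thetab}(\xb)]+\mbox{KL}(q_{\phib}\|p_{\thetab})~.
\]
Thus the right-hand side of \eqref{eq:lowerbound} decomposes into the MLE objective plus the nonnegative slack $\mbox{KL}(q_{\phib}\|p_{\thetab})$, which is exactly the Jensen gap used to derive the bound.

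Second I would characterize the fixed point of the $\phib$-update. As flagged right after \eqref{eq:lowerbound}, this update is precisely $\min_{\phib}\mbox{KL}(q_{\phib}\|p_{\thetab})$; the same conclusion also follows from the distillation construction, since the amortized step matches $q_{\phib}$ to one CTF transition and the Langevin CTF of Section~\ref{sec:macgan} has unique stationary distribution $p_{\thetab}$ via the Fokker--Planck equation \eqref{eq:FPE}. At any fixed point of the alternating scheme, $q_{\phib}$ must therefore be invariant under the CTF, forcing $q_{\phib}=p_{\thetab}$ and $\mbox{KL}(q_{\phib}\|p_{\thetab})=0$. Substituting into the identity above collapses \eqref{eq:lowerbound} to an equality with the log-likelihood, so the outer maximization over $\thetab$ reduces to $\max_{\thetab}\tfrac{1}{N}\sum_i\log p_{\thetab}(\xb_i)$, and $\thetab^\star$ is the MLE.

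The main obstacle is the joint-fixed-point step: because the target $p_{\thetab}$ shifts whenever $\thetab$ is updated, the $\phib$-update is chasing a moving stationary distribution, so one must argue that the alternation actually admits a common fixed point. The paper's hint, which I would make precise, is that both updates are coordinate-ascent steps on the \emph{same} upper bound \eqref{eq:lowerbound}: this makes \eqref{eq:lowerbound} a common Lyapunov function for the algorithm, and any of its stationary points simultaneously satisfies the MLE optimality condition in $\thetab$ and $q_{\phib}=p_{\thetab}$ in $\phib$. Convergence of the inner CTF-distillation piece to the stationary distribution is supplied by Lemma~\ref{lem:variational_fp} and the MSE bound of Theorem~\ref{theo:mse}, completing the plan.
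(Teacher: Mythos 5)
Your proposal follows the paper's own argument: equality in the Jensen bound \eqref{eq:mleupper} holds iff $q_{\phib}=p_{\thetab}$, the CTF-driven $\phib$-update enforces $q_{\phib}\to p_{\thetab}$, and once $q_{\phib}=p_{\thetab}$ the bound collapses to the log-likelihood so the $\thetab$-optimum is the MLE; your explicit identity (right-hand side $=$ log-likelihood $+\,\mbox{KL}(q_{\phib}\|p_{\thetab})$) is just a sharper statement of the paper's ``equality iff'' step. One caution on your added Lyapunov argument: since the $\phib$-step \emph{minimizes} $\mbox{KL}(q_{\phib}\|p_{\thetab})$ it \emph{decreases} the upper bound while the $\thetab$-step increases it, so the bound is a max--min (saddle) objective rather than one that both coordinates ascend --- your ``common Lyapunov function'' reading inherits the paper's wording (and a sign slip on the entropy term in \eqref{eq:lowerbound}), and does not literally close the joint-fixed-point gap, though the paper's own proof is no more rigorous on this point.
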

Note another difference between MacGAN and standard GAN framework is the way of learning the generator $q_{\phib}$. We adopt the amortization idea, which directly guides $q_{\phib}$ to approach $p_{\thetab}$; whereas in GAN, the generator is optimized via a min-max procedure to make it approach the empirical data distribution $p_r$. By explicitly learning $p_{\thetab}$, MacGAN is able to evaluate likelihood for test data up to a constant.

\section{Related Work}
Our framework extends the idea of normalizing flows \citep{RezendeM:ICML15} and gradient flows \citep{AltieriD:NIPS15} to continuous-time flows, by developing theoretical properties on the convergence behavior. Inference based on CTFs has been studied in \cite{Sohl-DicksteinWMG:ICML15} based on maximum likelihood and \cite{SalimansKW:ICML15} based on the auxiliary-variable technique. However, they directly uses discrete approximations for the flow, and the approximation accuracy is unclear. Moreover, the inference network requires simulating a long Markov chain for the auxiliary model, thus is less efficient than ours. Finally, the inference network is implemented as a parametric distribution ({\it e.g.}, the Gaussian distribution), limiting the representation power, a common setting in existing auxiliary-variable based models \citep{TranRB:ICLR16}.
The idea of amortization \citep{GershmanG:IACCSS14} has recently been explored in various research topics for Bayesian inference such as in variational inference \citep{KingmaW:ICLR14,Rezende:ICML14} and Markov chain Monte Carlo \citep{WangL:ICLRW17,LiTL:ARXIV17,PuGHLHC:NIPS17}. Both \cite{WangL:ICLRW17} and \cite{PuGHLHC:NIPS17} extend the idea of Stein variational gradient descent \citep{LiuW:NIPS16} with amortized inference for a GAN-based and a VAE-based model, respectively, which resemble our proposed MacVAE and MacGAN in concept. \citet{LiTL:ARXIV17} applies amortization to distill knowledge from MCMC to learn a student network. The ideas in \cite{LiTL:ARXIV17} are similar to ours, but the motivation and underlying theory are different from that developed here. The authors proposed several divergence measures for distribution matching including the Jensen-Shannon divergence, similar to our method.

\section{Experiments}
We conduct experiments to test our CTF-based framework for efficient inference and density estimation problems, and compared them with related methods. Some experiments are based on the excellent code for SteinGAN\footnote{\href{https://github.com/DartML/SteinGAN}{https://github.com/DartML/SteinGAN}} \cite{WangL:ICLRW17}, where their default parameter setting are adopted. The discretization stepsize $h$ is robust as long as it is set in a reasonable range, {\it e.g.}, we set it the same as the stepsize in SGD. More experimental results are given in the SM, including a sensitiveness experiment on model parameters in Section~\ref{supp:robust_h}.

\subsection{CTFs for inference}

\vspace{-0.1cm}
\paragraph{Synthetic experiment}
We examine our amortized learning framework with three toy experiments. Particularly, we want to verify the necessity of distribution matching defined in \eqref{eq:update_phi}, {\it i.e.}, we test $\mathcal{D}$ implemented as a discriminator for Wasserstein distance (adversarial-CTF) against that implemented with standard Euclidean distance ($\ell_2$-CTF), which can be considered as an instance of the amortized MCMC \citep{LiTL:ARXIV17} with a Langevin-dynamic transition function and a Euclidean-distance-based divergence measure for samples. Two 2D distributions similar to \cite{RezendeM:ICML15}  are considered, defined in Section~\ref{supp:toy_dist} of the SM.
The inference network $q_{\phib}$ is defined to be a 2-layer MLP with isotropic normal random variables as input. Figure~\ref{fig:demo1} plots the densities estimated with the samples from transformations $\{\mathcal{T}_{K=100}\}$ (before optimizing $\phib$), as well as with samples generated directly from $q_{\phib}$ (after optimizing $\phib$). It is clear that the amortized learning with Wasserstein distance is able to distill knowledge from the CTF to the inference network, while the algorithm fails when Euclidean distance is adopted.

Next, we test MacVAE on a VAE setting on a simple synthetic dataset containing 4 data points, each is a 4D one-hot vector, with the non-zero elements at different positions. The prior of latent code is a 2D standard Normal. Figure~\ref{fig:demo2} plots the distribution of the learned latent code for VAE, adversarial-CTF and $\ell_2$-CTF. Each color means the codes for one particular observation.
It is observed that VAE divides the space into a mixture of 4 Gaussians (consistent with VAE theory), the adversarial-CTF learns complex posteriors, while the $\ell_2$-CTF converges to the mode of each posterior (consistent with Proposition~\ref{prop:D_euc}).
\vspace{-0.2cm}
\paragraph{MacVAE on MNIST}
Following \cite{RezendeM:ICML15,TomczakW:arxiv16}, we define the inference network as a deep neural network with two fully connected layers of size 300 with softplus activation functions. 
We compare MacVAE with the standard VAE and the VAE with normalizing flow, where testing ELBOs are reported (Section~\ref{sec:cal_elbo} of the SM describes how to calculate the ELBO). We do not compare with other state-of-the-art methods such as the inverse autoregressive flow \citep{KingmaSW:NIPS16}, because they typically endowed more complicated inference networks (with more parameters), unfair for comparison. We use the same inference network architecture for all the models. Figure~\ref{fig:inception} (left) plots the testing ELBO versus training epochs. MacVAE outperforms VAE and normalizing flows with a better ELBO (around -85.62).

\begin{figure}[t]
	\centering
	\begin{minipage}{1.0\linewidth}\vspace{-0.0cm}
		\includegraphics[width=0.99\columnwidth]{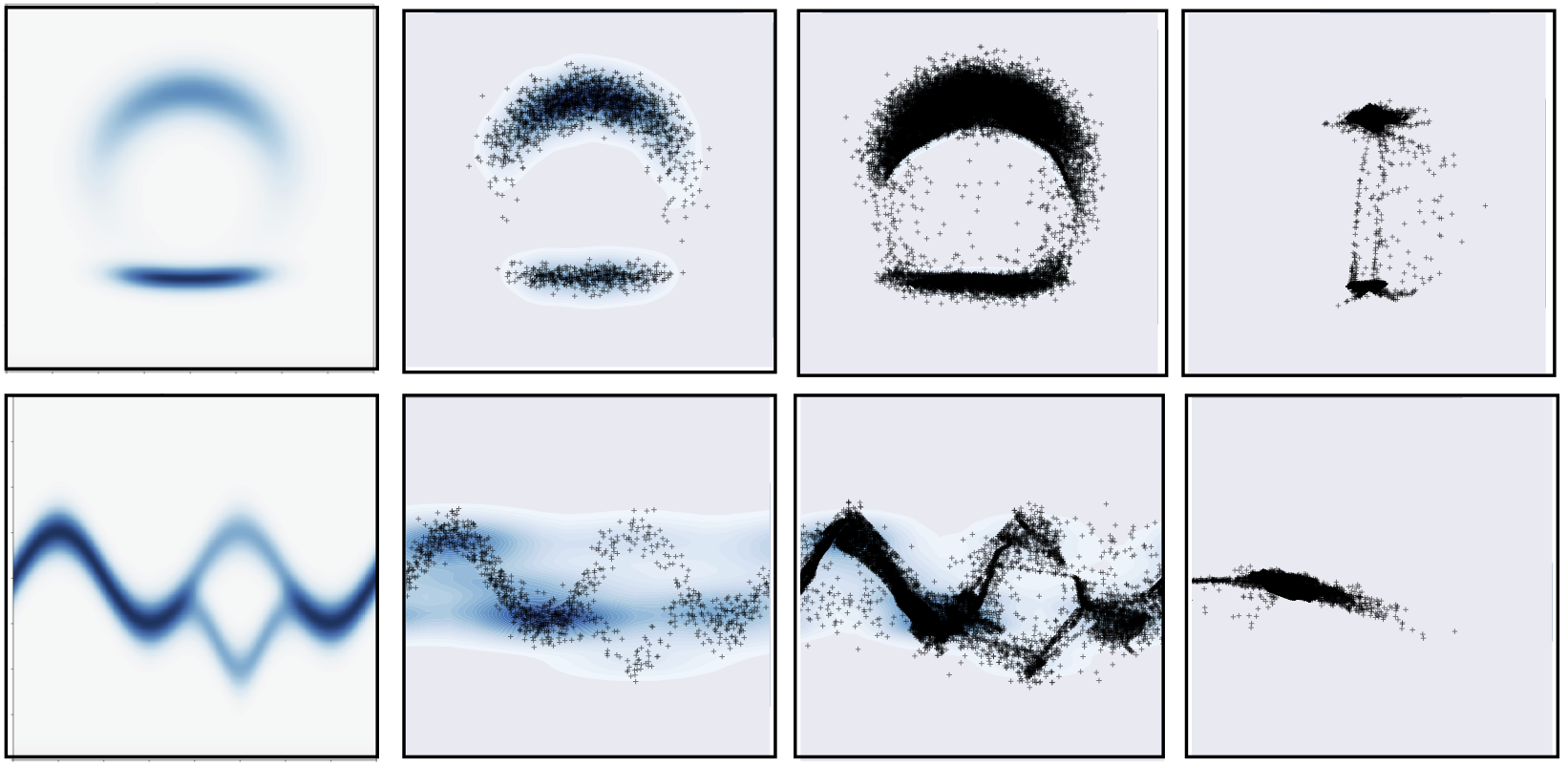}
	\end{minipage}
	\vspace{-3mm}
	\caption{Illustration of CTF on toy distributions. Each row is a distribution case. 1st column: true distributions; 2nd column: MCMC results; 3rd column: approximations via adversarial-CTF; 4th column: approximations via $\ell_2$-CTF.}
	\label{fig:demo1}
\end{figure}

\begin{figure}[ht]
	\centering
	\begin{minipage}{1.0\linewidth}\vspace{-0.2cm}
		\includegraphics[width=0.99\columnwidth]{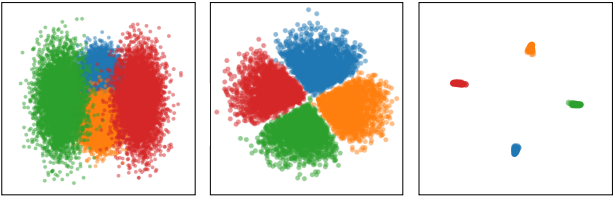}
	\end{minipage}
	\vspace{-3mm}
	\caption{Comparison of the learned latent space with standard VAE (left), adversarial-CTF (middle) and $\ell_2$-CTF (right).}
	\label{fig:demo2}
	\vskip -0.2in
\end{figure}

\begin{figure}[ht]
	\centering
	\includegraphics[width=\linewidth]{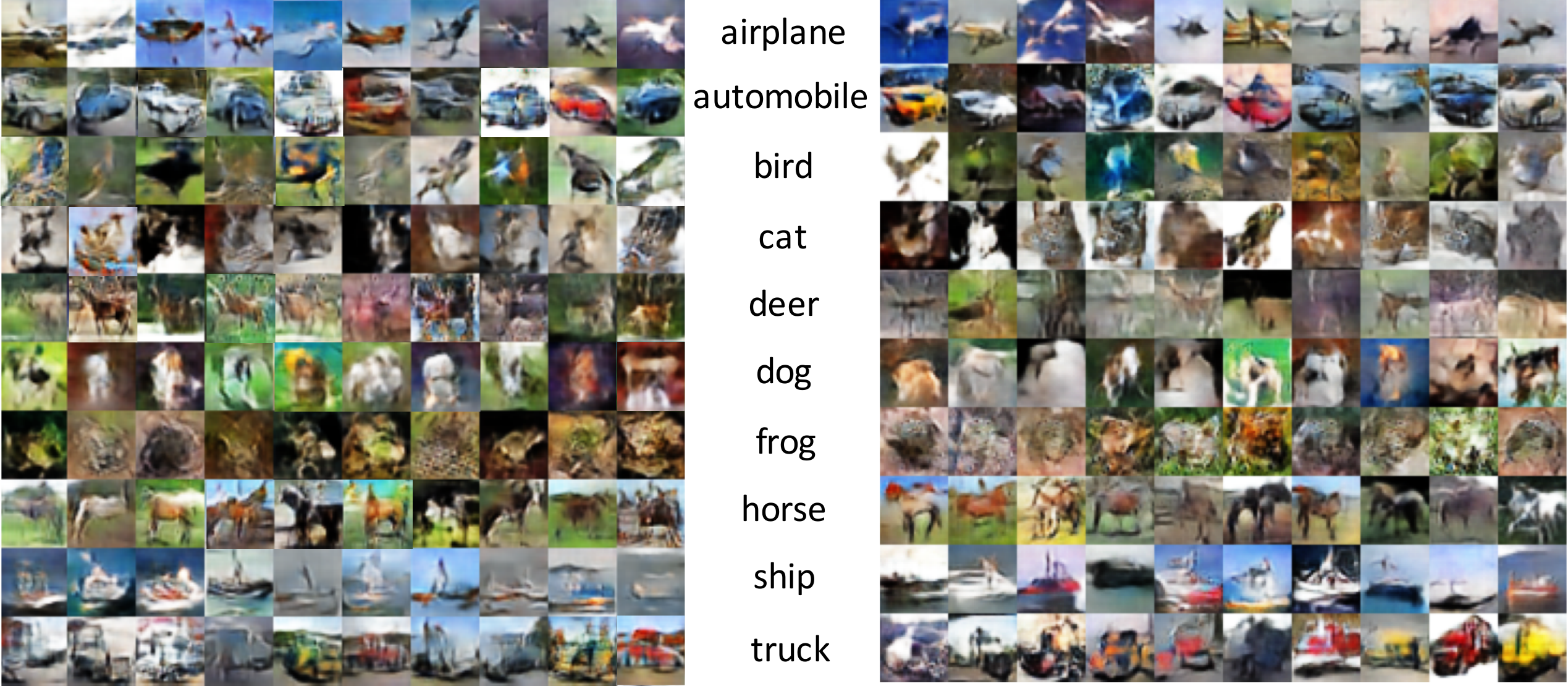}\vspace{0.2cm}
	\includegraphics[width=\linewidth]{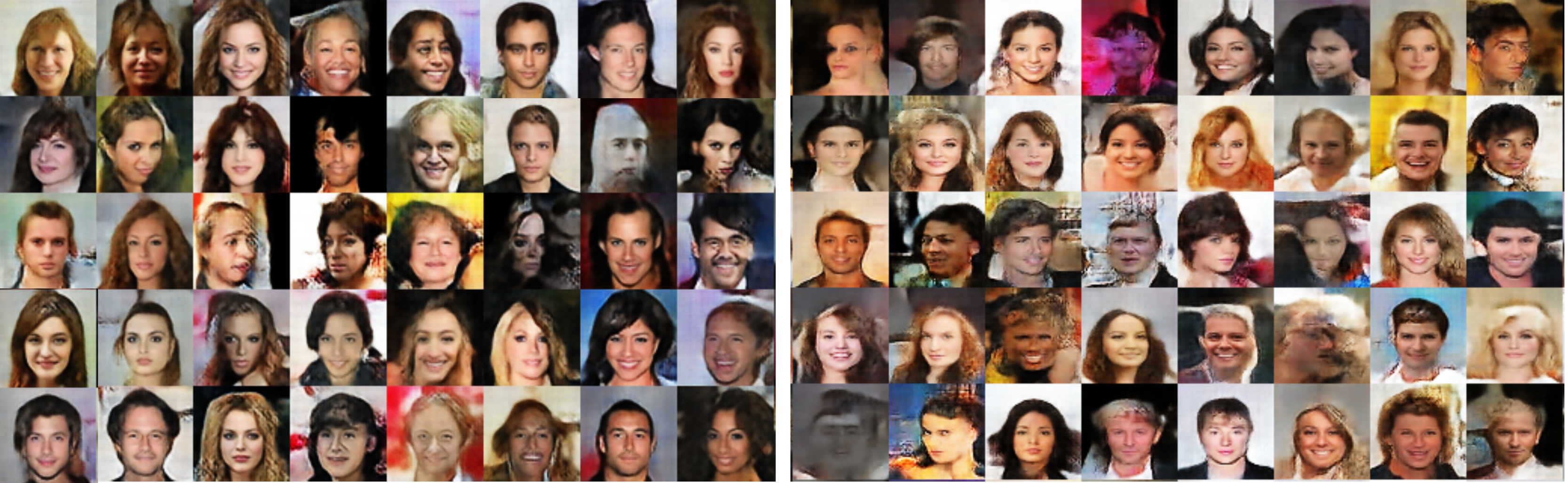}\vspace{0.2cm}
	\includegraphics[width=\linewidth]{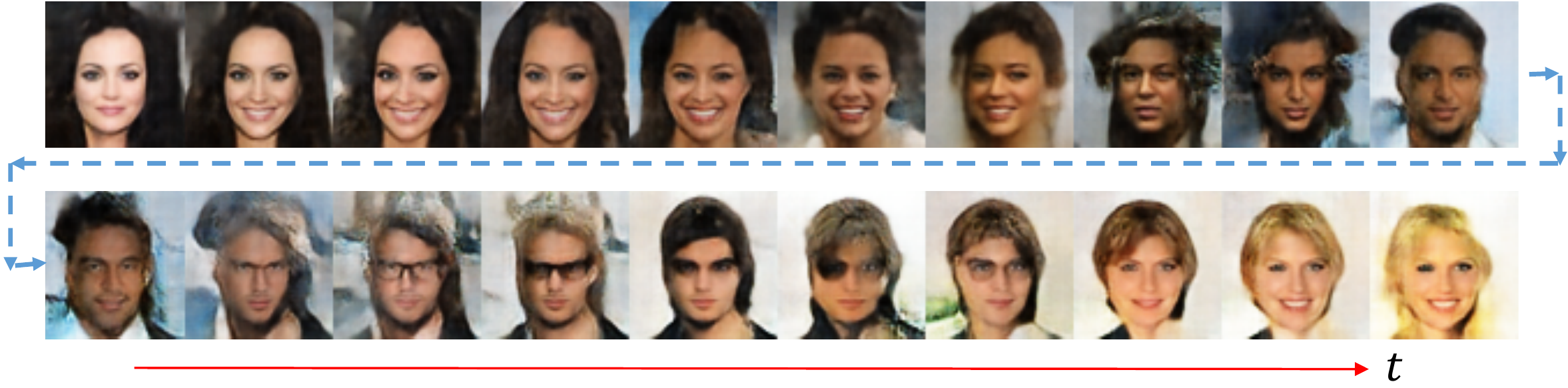}
	\vskip -0.05in
	\caption{Generated images for CIFAR-10 (top) and CelebA (middle) datasets with MacGAN (left) and SteinGAN (right). The bottom are images generated by a random walk on the $\omega$ space for the generator of MacGAN, {\it i.e.}, $\omega_t = \omega_{t-1} + 0.03\times \mbox{rand}([-1, 1])$.}
	\label{fig:macgan_celeb_cifar}
\end{figure} 
\vspace{-0.1cm}
\subsection{CTFs for density estimation}
\vspace{-0.2cm}
We test MacGAN on three datasets: MNIST, CIFAR-10 and CelabA. Following GAN-related methods, the model is evaluated by observing its ability to draw samples from the learned data distribution. Inspiring by \cite{WangL:ICLRW17}, we define a parametric form of the energy-based model as $p_{\thetab}(\xb) \propto \exp\{-\left\|\xb - \mbox{DEC}_{\thetab}\left(\mbox{ENC}_{\thetab}(\xb)\right)\right\|^2\}$, where $\mbox{ENC}_{\thetab}(\cdot)$ and $\mbox{DEC}_{\thetab}(\cdot)$ are encoder and decoder defined by using deep convolutional neural networks and deconvolutional neural networks, respectively, parameterized by $\thetab$. For simplicity, we adopt the popular DCGAN architecture \citep{RadfordMC:arxiv16} for the encoder and decoder. The generator $G_{\phib}$ is defined as a 3-layer CNN with the ReLU activation function (except for the top layer which uses tanh as the activation function, see SM~\ref{sec:additionalexp} for details). Following \cite{WangL:ICLRW17}, the stepsizes are set to $\frac{(m_e - e)\times l_r}{m_e-50}$, where $e$ indexes the epoch, $m_e$ is the total number of epochs, $l_r = \mbox{1e-4}$ when updating $\thetab$, and $l_r = \mbox{1e-3}$ when updating $\phib$. The stepsize in $\mathcal{L}_1$ is set to 1e-3.

\begin{figure}[t]
	\centering
	\begin{minipage}{0.49\linewidth}\vspace{-0.3cm}
		\includegraphics[width=\columnwidth]{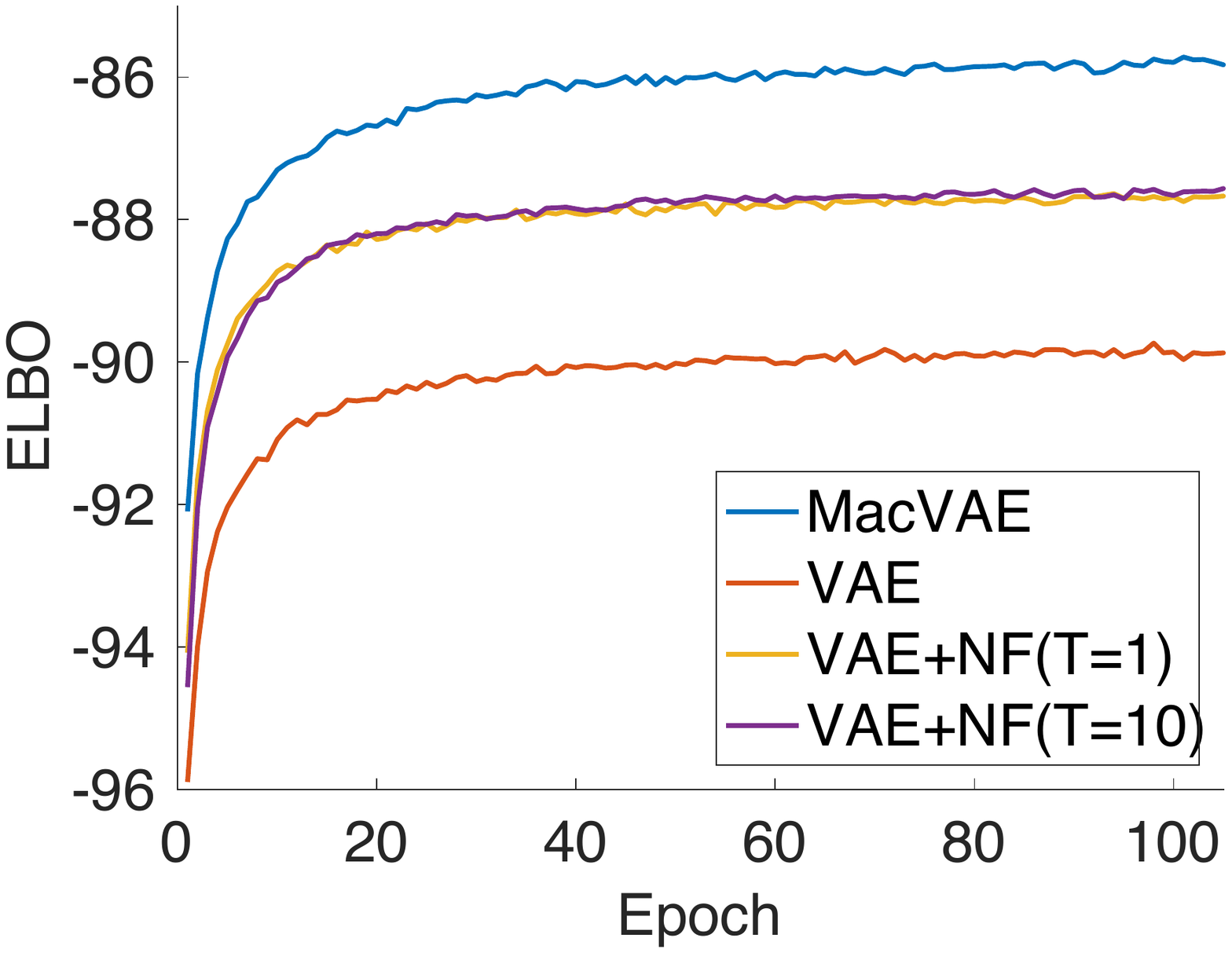}
	\end{minipage}
	\begin{minipage}{0.49\linewidth}
		\includegraphics[width=\columnwidth]{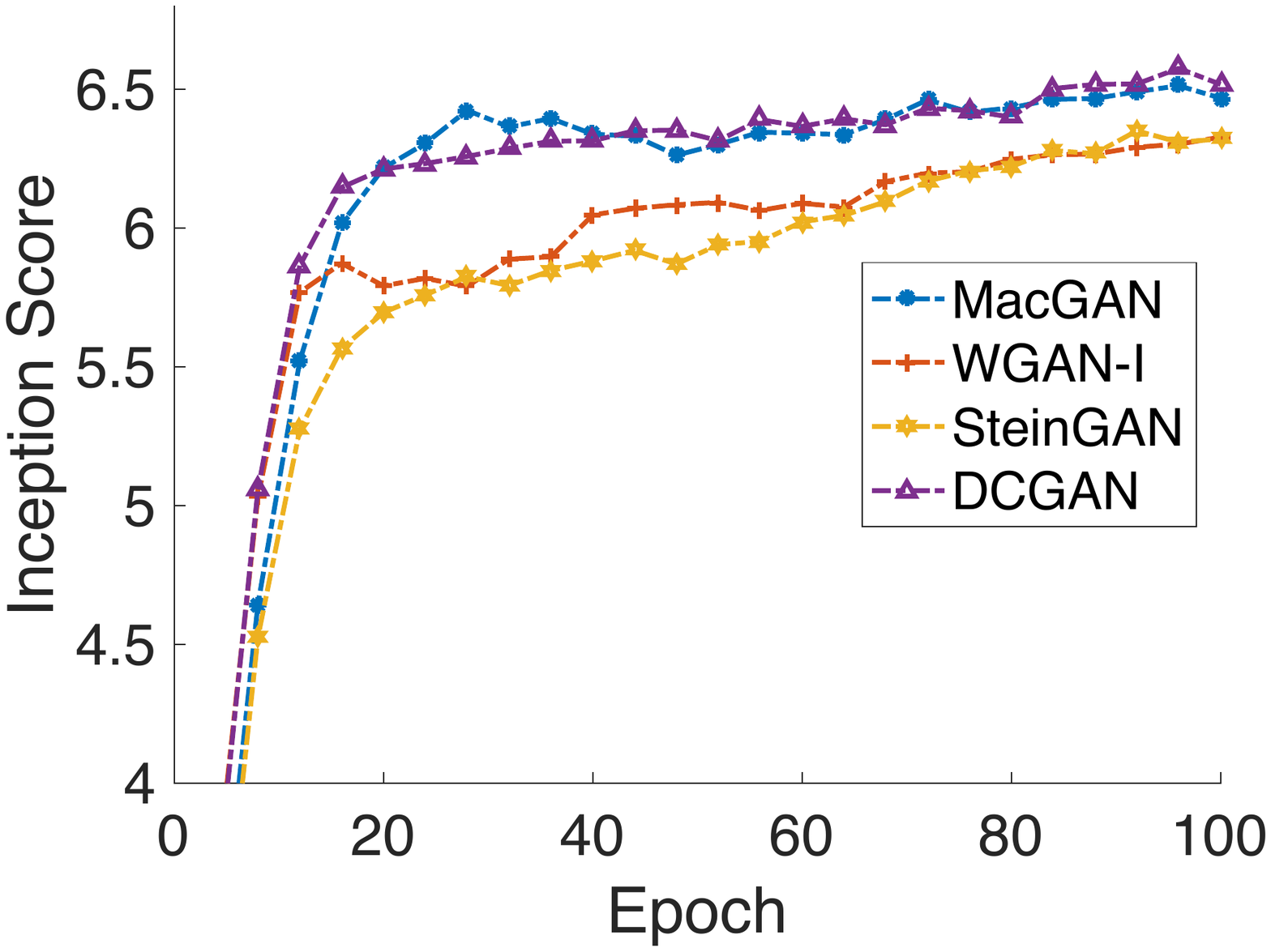}
	\end{minipage}
	\vskip -0.1in
	\caption{ELBO on MNIST vs epochs (left), and Inception score versus epochs (right) for different models. VAE with 80-layer NF is not included because it has much more parameters.}
	\label{fig:inception}
	\vskip -0.2in
\end{figure}
We compare MacGAN with DCGAN \citep{RadfordMC:arxiv16}, the improved WGAN (WGAN-I) \citep{GulrajaniAADC:arxiv17} and SteinGAN \citep{WangL:ICLRW17}. We plot  images generated with MacGAN and its most related method SteinGAN in Figure~\ref{fig:macgan_celeb_cifar} for CelebA and CIFAR-10 datasets. More results are provided in SM Section~\ref{sec:additionalexp}. We observe that visually MacGAN is able to generate clear-looking images. Following \cite{WangL:ICLRW17}, we also plot the images generated by a random walk in the $\omega$ space in Figure~\ref{fig:macgan_celeb_cifar}.

Qualitatively evaluating a GAN-like model is challenging. We follow literature and use the inception score \citep{SalimansGZCRC:arxiv16} to measure the quantity of the generated images. Figure~\ref{fig:inception} (right) plots inception scores vs epochs for different models. MacGAN obtains competitive inception scores with the popular DCGAN model. Quantitatively, we get a final inception score of 6.49 for MacGAN, compared to 6.35 for SteinGAN, 6.25 for WGAN-I and 6.58 for DCGAN.
\vspace{-0.3cm}
\section{Conclusion}
\vspace{-0.2cm}
We study the problem of applying CTFs for efficient inference and explicit density estimation in deep generative models, two important tasks in unsupervised learning. Compared to discrete-time NFs, CTFs are more general and flexible due to the fact that their stationary distributions can be controlled without extra flow parameters. We develop theory on the approximation accuracy when adopting a CTF to approximate a target distribution. We apply CTFs on two classes of deep generative models, a variational autoencoder for efficient inference, and a GAN-like density estimator for explicit density estimation and efficient data generation. Experiments show encouraging results of our framework in both models compared to existing techniques. One interesting direction of future work is to explore more efficient learning algorithms for the proposed CTF-based framework.

\newpage

\section*{Acknowledgements}
We thank the anonymous reviewers for their useful comments. This research was supported in part by DARPA, DOE, NIH, ONR and NSF.
%

\nocite{langley00}

\bibliography{reference}
\bibliographystyle{icml2018}

\newpage\phantom{blabla}
\appendix
%
%
%

\section{Assumptions of Theorem~\ref{theo:mse}}\label{sec:ass}

First, let us define the infinitesimal generator of the diffusion \eqref{eq:diffusion}. Formally, the {\em generator} 
$\mathcal{L}$ of the diffusion \eqref{eq:diffusion} is defined for any compactly supported twice 
differentiable function $f: \RR^L \rightarrow \RR$, such that,
\begin{align*}
&\mathcal{L}f(\Zb_t) =\triangleq \lim_{h \rightarrow 0^{+}} \frac{\mathbb{E}\left[f(\Zb_{t+h})\right] - f(\Zb_t)}{h} \\
&= \rbr{F(\Zb_t) \cdot \nabla + \frac{1}{2}\left(G(\Zb_t) G(\Zb_t)^T\right)\!:\! \nabla \nabla^T} f(\Zb_t)~,
\end{align*}
where $\ab\cdot \bb \triangleq \ab^T\bb$, $\Ab\!:\!\Bb \triangleq \text{tr}(\Ab^T \Bb)$,
$h\rightarrow 0^{+}$ means $h$ approaches zero along the positive real axis.

Given an ergodic diffusion \eqref{eq:diffusion} with an invariant measure 
$\rho(\Zb)$, the posterior average is defined as:
$\bar{\psi} \triangleq \int \psi(\Zb) \rho(\Zb) \mathrm{d}\Zb$ 
for some test function $\psi(\Zb)$ of interest. For a given numerical method with generated samples 
$(\zb_{k})_{k=1}^K$, we use the {\em sample average} $\hat{\psi}$ defined as 
$\hat{\psi}_K = \frac{1}{K} \sum_{k = 1}^K \psi(\zb_{k})$
to approximate $\bar{\psi}$. We define a functional $\tilde{\psi}$ that solves the following \emph{Poisson Equation}:
\begin{align}\label{eq:PoissonEq1}
\mathcal{L} \tilde{\psi}(\zb_{k}) =  \psi(\zb_{k}) - \bar{\psi}
\end{align}

We make the following assumptions on $\tilde{\psi}$.

\begin{assumption}\label{ass:assumption1}
	$\tilde{\psi}$ exists, and its up to 4rd-order derivatives, $\mathcal{D}^k \tilde{\psi}$, are bounded by a
	function $\mathcal{V}$, {\it i.e.}, 
	$\|\mathcal{D}^k \tilde{\psi}\| \leq C_k\mathcal{V}^{p_k}$ for $k=(0, 1, 2, 3, 4)$, $C_k, p_k > 0$. Furthermore, 
	the expectation of $\mathcal{V}$ on $\{\zb_{k}\}$ is bounded: $\sup_l \mathbb{E}\mathcal{V}^p(\zb_{k}) < \infty$, 
	and $\mathcal{V}$ is smooth such that 
	$\sup_{s \in (0, 1)} \mathcal{V}^p\left(s\zb + \left(1-s\right)\yb\right) \leq C\left(\mathcal{V}^p\left(\zb\right) + \mathcal{V}^p\left(\yb\right)\right)$, $\forall \zb, \yb, p \leq \max\{2p_k\}$ for some $C > 0$.
\end{assumption}

\section{Proofs for Section~\ref{sec:CTF_inference}}

\begin{proof}[Sketch Proof of Lemma~\ref{lem:variational_fp}]
	First note that \eqref{eq:variationalFP} in Lemma~\ref{lem:variational_fp} corresponds to eq.13 in \cite{JordanKO:MA98}, where $F(p)$ in \cite{JordanKO:MA98} is in the form of $\mbox{KL}(\rho\|p_{\thetab}(\xb, \zb))$ in our setting. 
	
	Proposition~4.1 in \cite{JordanKO:MA98} then proves that \eqref{eq:variationalFP} has a unique solution. Theorem~5.1 in \cite{JordanKO:MA98} then guarantees that the solution of \eqref{eq:variationalFP} approach the solution of the Fokker-Planck equation in \eqref{eq:FPE}, which is $\rho_T$ in the limit of $h\rightarrow
	0$.
	
	Since this is true for each $k$ (thus each $t$ in $\rho_t$), we conclude that $\tilde{\rho}_k = \rho_{hk}$ in the limit of $h\rightarrow 0$.
\end{proof}

To prove Theorem~\ref{theo:mse}, we first need a convergence result about convergence to equilibrium in Wasserstein distance for Fokker-Planck equations, which is presented in \cite{BolleyGG:JFA12}. Putting in our setting, we can get the following lemma based on Corollary~2.4 in \cite{BolleyGG:JFA12}.

\begin{lemma}[\cite{BolleyGG:JFA12}]\label{lem:convergence_wasserstein}
	Let $\rho_T$ be the solution of the FP equation \eqref{eq:FPE} at time $T$, $p_{\thetab}(\xb, \zb)$ be the joint posterior distribution given $\xb$. Assume that $\int \rho_T(\zb) p_{\thetab}^{-1}(\xb, \zb)\mathrm{d}\zb < \infty$ and there exists a constant $C$ such that $\frac{\mathrm{d}W_2^2\left(\rho_T, p_{\thetab}(\xb, \zb)\right)}{\mathrm{d} t} \geq C W_2^2\left(\rho_T, p_{\thetab}(\xb, \zb)\right)$. Then
	\begin{align}
	W_2\left(\rho_T, p(\xb, \zb)\right) \leq W_2\left(\rho_0, p(\xb, \zb)\right) e^{-CT}~.
	\end{align}
\end{lemma}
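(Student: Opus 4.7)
The plan is to treat Lemma~\ref{lem:convergence_wasserstein} as an essentially immediate consequence of Gronwall's inequality, since the crucial differential inequality on the squared Wasserstein distance is already taken as a hypothesis. Setting $f(t) \triangleq W_2^2(\rho_t, p_{\thetab}(\xb,\zb))$ and reading the rate hypothesis in the convergence-to-equilibrium convention (i.e., $\dot f(t) \leq -2C f(t)$, so that the sign matches the exponentially-decaying conclusion), the proof reduces to integrating an ordinary differential inequality and extracting a square root at the end.

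First I would check that $f$ is well defined and absolutely continuous in $t$. Finiteness of $f(t)$ for every $t \geq 0$ follows from the integrability condition $\int \rho_T\, p_{\thetab}^{-1}\,\mathrm{d}\zb < \infty$, which bounds the $\chi^2$-divergence of $\rho_t$ relative to $p_{\thetab}$ and, via the Talagrand $T_2$ inequality implicit in the rate hypothesis, controls $W_2$; absolute continuity of $t\mapsto f(t)$ is supplied by Otto's interpretation of the Fokker--Planck equation as a gradient flow of $\mathrm{KL}(\cdot\|p_{\thetab})$ on the Wasserstein manifold $\mathcal{P}_2(\mathbb{R}^L)$. With these preliminaries in place, Gronwall's lemma applied to $\dot f \leq -2C f$ gives $f(T) \leq f(0)\, e^{-2CT}$ after integration from $0$ to $T$ (handling the possibly-empty zero set of $f$ by monotone limiting), and taking square roots, absorbing the factor $1/2$ into the constant $C$ to match the paper's convention, yields the claimed bound $W_2(\rho_T, p_{\thetab}) \leq W_2(\rho_0, p_{\thetab})\, e^{-CT}$.

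I expect the main obstacle, were the differential inequality not assumed outright, to be establishing it in the first place. This is the substantive content of Bolley--Gentil--Guillin and typically requires a Bakry--\'Emery-type curvature lower bound $\operatorname{Hess}(-\log p_{\thetab}) \succeq C\, I_L$ (or equivalently a log-Sobolev or $T_2$ inequality for $p_{\thetab}$) together with an HWI inequality to close the loop between $W_2$-decay, Fisher information, and relative entropy. Since the lemma is imported as Corollary~2.4 of \cite{BolleyGG:JFA12} and only invoked here to furnish the $e^{-CT}$ bound used inside Theorem~\ref{theo:mse}, my proof would simply cite that reference for the functional-inequality input and then close with the short Gronwall calculation above, rather than reprove the curvature--to--$W_2$-contraction chain from scratch.
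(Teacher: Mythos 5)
The paper offers no proof of this lemma at all: it is imported wholesale as a restatement of Corollary~2.4 of \cite{BolleyGG:JFA12}, with the surrounding text saying only that ``putting in our setting, we can get the following lemma.'' Your Gronwall reconstruction is therefore not so much a different route as the only route on display, and it is sound: given a differential inequality of the form $\dot f(t)\leq -2Cf(t)$ for $f(t)=W_2^2(\rho_t,p_{\thetab})$, integration and a square root give exactly the stated bound. You are also right to flag the two blemishes in the paper's statement that any honest proof must repair. First, the hypothesis as literally written, $\frac{\mathrm{d}}{\mathrm{d}t}W_2^2\geq C\,W_2^2$ with $C>0$, forces exponential \emph{growth} and contradicts the conclusion; the intended reading (matching Bolley--Gentil--Guillin) reverses the sign. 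Second, there is a factor-of-two mismatch: $\dot f\leq -Cf$ yields $W_2(\rho_T,p)\leq W_2(\rho_0,p)e^{-CT/2}$, so obtaining $e^{-CT}$ requires the constant in the differential inequality to be $2C$, which you absorb explicitly and which is consistent with the $e^{-2ChK}$ term appearing in Theorem~\ref{theo:mse} for $W_2^2$. Note also that the integrability condition is stated on $\rho_T$ where the source imposes it on the initial datum $\rho_0$ --- another apparent typo your reading silently corrects. The one place your sketch is thinner than it should be is the claim that finiteness and absolute continuity of $f$ follow from the $\chi^2$ condition ``via the Talagrand $T_2$ inequality implicit in the rate hypothesis''; that chain is genuinely the content of \cite{BolleyGG:JFA12} and cannot be extracted from the stated hypotheses alone, so deferring it to the citation, as you do, is the correct and only available move.
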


We further need to borrow convergence results from \cite{MattinglyST:JNA10,VollmerZT:arxiv15,ChenDC:NIPS15} to characterize error bounds of a numerical integrator for the diffusion \eqref{eq:diffusion}. Specifically, the goal is to evaluate the posterior average of a test function $\psi(\zb)$, defined as $\bar{\psi} \triangleq \int \psi(\zb)p_{\thetab}(\xb, \zb)\mathrm{d}\zb$. When using a numerical integrator to solve \eqref{eq:diffusion} to get samples $\{\zb_k\}_{k=1}^K$, the sample average $\hat{\psi}_K \triangleq \frac{1}{K}\sum_{k=1}^{K}\psi(\zb_k)$ is used to approximate the posterior average. The accuracy is characterized by the mean square error (MSE) defined as: $\mathbb{E}\left(\hat{\psi}_K - \bar{\psi}\right)^2$. Lemma~\ref{lem:discrete} derives the bound for the MSE.

\begin{lemma}[\cite{VollmerZT:arxiv15}]\label{lem:discrete}
	Under Assumption~\ref{ass:assumption1}, and for a 1st-order numerical intergrator, the MSE is bounded, for a constant $C$ independent of $h$ and $K$, by
	\begin{align*}
	\mathbb{E}\left(\hat{\psi}_K - \bar{\psi}\right)^2 \leq C\left(\frac{1}{hK} + h^2\right)~.
	\end{align*}
\end{lemma}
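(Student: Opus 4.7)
My plan is to follow the standard weak-error analysis for SDE integrators via the Poisson-equation method, adapting the arguments of~\cite{MattinglyST:JNA10,VollmerZT:arxiv15} to the specific assumptions stated in Assumption~\ref{ass:assumption1}. The starting point is the Poisson equation~\eqref{eq:PoissonEq1}, which reduces the problem of bounding $\hat{\psi}_K - \bar{\psi}$ to controlling a telescoping sum in $\tilde{\psi}$ plus discretization residuals.

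First, using Ito's formula and a Taylor expansion of one step of the 1st-order integrator, I would establish the local weak error identity
\begin{align*}
\mathbb{E}\!\left[\tilde{\psi}(\zb_{k+1}) \,\middle|\, \zb_k\right] = \tilde{\psi}(\zb_k) + h\,\mathcal{L}\tilde{\psi}(\zb_k) + h^2\,R(\zb_k),
\end{align*}
where $R$ is controlled by the second, third, and fourth derivatives of $\tilde{\psi}$ together with polynomial growth in $\zb_k$, all of which are dominated by the function $\mathcal{V}$ in Assumption~\ref{ass:assumption1}. Substituting the Poisson equation $\mathcal{L}\tilde{\psi}(\zb_k) = \psi(\zb_k) - \bar{\psi}$ and rearranging gives
\begin{align*}
\psi(\zb_k) - \bar{\psi} = \tfrac{1}{h}\bigl(\mathbb{E}[\tilde{\psi}(\zb_{k+1})\mid\zb_k] - \tilde{\psi}(\zb_k)\bigr) - h\,R(\zb_k).
\end{align*}

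Next, I would split $\mathbb{E}[\tilde{\psi}(\zb_{k+1})\mid\zb_k] - \tilde{\psi}(\zb_k) = (\tilde{\psi}(\zb_{k+1}) - \tilde{\psi}(\zb_k)) - M_k$, where $M_k \triangleq \tilde{\psi}(\zb_{k+1}) - \mathbb{E}[\tilde{\psi}(\zb_{k+1})\mid\zb_k]$ is a martingale-difference sequence with respect to the natural filtration. Averaging over $k = 1,\ldots,K$ and exploiting the telescoping,
\begin{align*}
\hat{\psi}_K - \bar{\psi} = \frac{\tilde{\psi}(\zb_{K+1}) - \tilde{\psi}(\zb_1)}{hK} - \frac{1}{hK}\sum_{k=1}^K M_k - \frac{h}{K}\sum_{k=1}^K R(\zb_k).
\end{align*}
Squaring and applying $(a+b+c)^2 \leq 3(a^2+b^2+c^2)$ decomposes the MSE into three pieces. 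The boundary term is $O(1/(hK)^2)$ from the uniform moment bound $\sup_k \mathbb{E}\tilde{\psi}(\zb_k)^2 < \infty$, and is dominated by $1/(hK)$. The martingale sum satisfies $\mathbb{E}\bigl(\sum_k M_k\bigr)^2 = \sum_k \mathbb{E}M_k^2$ by orthogonality of increments, and each $\mathbb{E}M_k^2 = O(h)$ because the one-step fluctuation is driven by a $\sqrt{h}$-scale Gaussian noise while $\nabla\tilde{\psi}$ is controlled by $\mathcal{V}$; this yields $O(1/(hK))$ after dividing by $(hK)^2$ and summing $K$ terms. The remainder contributes $O(h^2)$ by the uniform bound on $\mathbb{E}R(\zb_k)^2$.

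The main obstacle is establishing the uniform-in-$k$ moment bounds $\sup_k \mathbb{E}\mathcal{V}^p(\zb_k) < \infty$ and then propagating them through the polynomial-growth remainder $R$ and the first-derivative fluctuation of $\tilde{\psi}$; this is precisely the role of Assumption~\ref{ass:assumption1}, which couples smoothness of $\tilde{\psi}$ with a Lyapunov-type dominating function $\mathcal{V}$ whose moments are stable along the discretized chain. Once these estimates are secured, combining the three contributions yields $\mathrm{MSE} \leq C\bigl(\tfrac{1}{hK} + h^2\bigr)$ as claimed.
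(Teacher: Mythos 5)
Your proposal is correct, but note that the paper does not prove this lemma at all: it imports the bound verbatim from \cite{VollmerZT:arxiv15} as a black box. What you have written is a faithful reconstruction of the standard argument behind that cited result (the Poisson-equation reduction, the local $O(h^2)$ weak-error expansion of the one-step integrator, the telescoping plus martingale-difference decomposition, and the uniform moment bounds on $\mathcal{V}$ from Assumption~\ref{ass:assumption1} giving the $1/(hK)$ and $h^2$ contributions), exactly as carried out in \cite{MattinglyST:JNA10,VollmerZT:arxiv15,ChenDC:NIPS15}.
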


Furthermore, except for the 2nd-order Wasserstein distance defined in Lemma~\ref{lem:variational_fp}, we define the 1st-order Wasserstein distance between two probability measures $\mu_1$ and $\mu_2$ as
\begin{align}\label{eq:W1}
W_1\left(\mu_1, \mu_2\right) \triangleq \inf_{p\in \mathcal{P}(\mu_1, \mu_2)}\int \left\|\xb - \yb\right\|_2p(\mathrm{d}\xb,\mathrm{d}\yb)~.
\end{align}

According to the Kantorovich-Rubinstein duality \cite{ArjovskyCB:arxiv17}, $W_1(\mu_1, \mu_2)$ is equivalently represented as
\begin{align}\label{eq:W1_1}
W_1\left(\mu_1, \mu_2\right) = \sup_{f \in \mathcal{L}_1}\mathbb{E}_{\zb \sim \mu_1}\left[f(\zb)\right] - \mathbb{E}_{\zb \sim \mu_2}\left[f(\zb)\right]~,
\end{align}
where $\mathcal{L}_1$ is the space of 1-Lipschitz functions $f: \mathbb{R}^L \rightarrow \mathbb{R}$.

We have the following relation between $W_1(\mu_1, \mu_2)$ and $W_2(\mu_1, \mu_2)$.

\begin{lemma}[\cite{GivensS:MMJ84}]\label{lem:wasserstein}
	We have for any two distributions $\mu_1$ and $\mu_2$ that $W_1(\mu_1, \mu_2) \leq W_2(\mu_1, \mu_2)$.
\end{lemma}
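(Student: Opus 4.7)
The statement $W_1(\mu_1,\mu_2)\le W_2(\mu_1,\mu_2)$ is a classical monotonicity of Wasserstein distances with respect to the order; my plan is a one-shot application of Cauchy--Schwarz (equivalently, Jensen for the concave map $u\mapsto\sqrt{u}$) against a single coupling, exploiting that both distances are defined as infima over the same coupling set $\mathcal{P}(\mu_1,\mu_2)$.

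First, fix any coupling $p\in\mathcal{P}(\mu_1,\mu_2)$. Since $p$ is a probability measure, Cauchy--Schwarz applied to the functions $f(\xb,\yb)=\|\xb-\yb\|_2$ and $g\equiv 1$ yields
\[
\int \|\xb-\yb\|_2\, p(\mathrm{d}\xb,\mathrm{d}\yb) \;\le\; \left(\int \|\xb-\yb\|_2^2\, p(\mathrm{d}\xb,\mathrm{d}\yb)\right)^{1/2}\!\!\cdot\left(\int 1\, p(\mathrm{d}\xb,\mathrm{d}\yb)\right)^{1/2}\!\! = \left(\int \|\xb-\yb\|_2^2\, p(\mathrm{d}\xb,\mathrm{d}\yb)\right)^{1/2}.
\]
Next, specialize this to a near-optimal coupling for $W_2$: by definition of $W_2^2$ as an infimum, for every $\varepsilon>0$ there exists $p_\varepsilon\in\mathcal{P}(\mu_1,\mu_2)$ with $\int \|\xb-\yb\|_2^2\, p_\varepsilon(\mathrm{d}\xb,\mathrm{d}\yb) \le W_2^2(\mu_1,\mu_2)+\varepsilon$. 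Combined with the display above and the fact that $W_1$ is an infimum over all couplings (hence bounded by the $L^1$ cost of the particular coupling $p_\varepsilon$), we obtain
\[
W_1(\mu_1,\mu_2) \;\le\; \int \|\xb-\yb\|_2\, p_\varepsilon(\mathrm{d}\xb,\mathrm{d}\yb) \;\le\; \sqrt{W_2^2(\mu_1,\mu_2)+\varepsilon}.
\]
Letting $\varepsilon\downarrow 0$ yields $W_1(\mu_1,\mu_2)\le W_2(\mu_1,\mu_2)$, as desired. (If one prefers to work directly with an optimal coupling, its existence under mild moment assumptions follows from standard compactness arguments in optimal transport; the $\varepsilon$-argument above avoids that technicality entirely.)

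There is essentially no hard step: the argument is a two-line inequality, and the only subtlety worth flagging is that $W_2(\mu_1,\mu_2)$ may be infinite, in which case the statement is trivially true, so one may assume $W_2<\infty$ and the $\varepsilon$-argument goes through. No properties beyond the definitions of $W_1$ and $W_2$ (both stated in the excerpt) and Cauchy--Schwarz are needed, so no earlier result in the paper is actually invoked.
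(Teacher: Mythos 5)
Your proof is correct. Note, however, that the paper does not actually prove this lemma: it is imported verbatim from the cited reference (Givens and Shortt, 1984) with no argument given, so there is no "paper proof" to compare against. Your derivation is the standard one and makes the statement self-contained: Cauchy--Schwarz (equivalently, Jensen applied to $u\mapsto\sqrt{u}$, or the monotonicity of $L^p$ norms on a probability space) bounds the $L^1$ transport cost of any coupling by the square root of its $L^2$ cost, and since both Wasserstein distances are infima over the same coupling set $\mathcal{P}(\mu_1,\mu_2)$, the inequality passes to the infima. Your $\varepsilon$-approximation step is a clean way to avoid invoking existence of an optimal coupling; one could equally note that since $u\mapsto\sqrt{u}$ is increasing, $\inf_p \bigl(\int\|\xb-\yb\|_2^2\,p(\mathrm{d}\xb,\mathrm{d}\yb)\bigr)^{1/2} = \bigl(\inf_p \int\|\xb-\yb\|_2^2\,p(\mathrm{d}\xb,\mathrm{d}\yb)\bigr)^{1/2} = W_2(\mu_1,\mu_2)$, which collapses the argument to one line. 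Your remarks on the case $W_2=\infty$ and on not needing any other result from the paper are both accurate.
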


Now it is ready to prove Theorem~\ref{theo:mse}.

\begin{proof}[Proof of Theorem~\ref{theo:mse}]
	The idea is to simply decompose the MSE into two parts, with one part charactering the MSE of the numerical method, the other part charactering the MSE of $\rho_T$ and $p_{\thetab}(\xb, \zb)$, which consequentially can be bounded using Lemma~\ref{lem:convergence_wasserstein} above.
	
	Specifically, we have
	\begin{align*}
	&\mbox{MSE}(\bar{\rho}_T, \rho_T; \psi) \triangleq \mathbb{E}\left(\int \psi(\zb)(\tilde{\rho}_T - \rho_T)(\zb)\mathrm{d}\zb\right)^2 \\
	=& \mathbb{E}\left(\frac{1}{K}\sum_{k = 1}^K\psi(\zb_k) - \int \psi(\zb)\rho_T(\zb)\mathrm{d}\zb\right)^2 \\
	=& \mathbb{E}\left(\left(\frac{1}{K}\sum_{k = 1}^K\psi(\zb_k) - \int \psi(\zb)p_{\thetab}(\xb, \zb)\mathrm{d}\zb\right) \right. \\
	&\left.~~~~- \left(\int \psi(\zb)\rho_T(\zb)\mathrm{d}\zb - \int \psi(\zb)p_{\thetab}(\xb, \zb)\mathrm{d}\zb\right)\right)^2 \\
	\stackrel{(1)}{=}& \mathbb{E}\left(\frac{1}{K}\sum_{k = 1}^K\psi(\zb_k) - \int \psi(\zb)p_{\thetab}(\xb, \zb)\mathrm{d}\zb\right)^2 \\
	&~~~~+ \left(\int \psi(\zb)\rho_T(\zb)\mathrm{d}\zb - \int \psi(\zb)p_{\thetab}(\xb, \zb)\mathrm{d}\zb\right)^2 \\
	\stackrel{(2)}{\leq}& \mathbb{E}\left(\frac{1}{K}\sum_{k = 1}^K\psi(\zb_k) - \int \psi(\zb)p_{\thetab}(\xb, \zb)\mathrm{d}\zb\right)^2 + W_1^2(\rho_T, p_{\thetab}) \\
	\stackrel{(3)}{\leq}& \mathbb{E}\left(\frac{1}{K}\sum_{k = 1}^K\psi(\zb_k) - \int \psi(\zb)p_{\thetab}(\xb, \zb)\mathrm{d}\zb\right)^2 + W_2^2(\rho_T, p_{\thetab}) \\
	\stackrel{(4)}{\leq}& C_1\left(\frac{1}{hK} + h^2\right) + W_2^2\left(\rho_0, p(\xb, \zb)\right) e^{-2CT} \\
	=& O\left(\frac{1}{hK} + h^2 + e^{-2ChK}\right)~,
	\end{align*}
	where ``(1)'' follows by the fact that $\mathbb{E}\left(\frac{1}{K}\sum_{k = 1}^K\psi(\zb_k) - \int \psi(\zb)p_{\thetab}(\xb, \zb)\mathrm{d}\zb\right) = 0$ \cite{ChenDC:NIPS15}; ``(2)'' follows by the definition of $W_1(\mu_1, \mu_2)$ in \eqref{eq:W1} and the 1-Lipschitz assumption of the test function $\psi$; ``(3)'' follows by Lemma~\ref{lem:wasserstein}; ``(4)'' follows by Lemma~\ref{lem:convergence_wasserstein} and Lemma~\ref{lem:discrete}.
\end{proof}

\section{Sample Distance $\mathcal{D}$ Implemented as a Discriminator in the GAN Framework}\label{supp:discriminator}

We first prove Proposition~\ref{prop:D_euc}, and then describe our implementation for the Wasserstein distance $\mathcal{D}$ in \eqref{eq:update_phi}.

\begin{proof}[Proof of Proposition~\ref{prop:D_euc}]
	By defining $\mathcal{D}$ as standard Euclidean distance, the objective becomes:
	\begin{align*}
	\phib^\prime = \arg\min_{\phib}\frac{1}{S}\sum_{i=1}^S\left\|\zb_0^{\prime (i)} - \zb_1^{(i)}\right\|^2~,
	\end{align*}
	where $\{\zb_0^{\prime (i)}\}_{i=1}^S$ are a set of samples generated from $q_{\phib^\prime}(\zb_0^\prime|\xb)$ via $Q_{\phi}(\cdot)$, {\it i.e.}
	$$\omega^{\prime i} \sim q_0(\omega), ~~\tilde{\zb}_0^{\prime i} = Q_{\phib}(\cdot | \xb, \omega^{\prime i})~,$$
	and $\{\zb_1^{(i)}\}_{i=1}^S$ are samples drawn by $$\omega^i \sim q_0(\omega), ~~\tilde{\zb}_0^i = Q_{\phib}(\cdot | \xb, \omega^i), ~~\zb_1^{(i)} \sim \mathcal{T}_1(\tilde{\zb}_0^i)~.$$
	
	For simplicity, we consider $\mathcal{T}_1$ as one discretized step for Langevin dynamics, {\it i.e.},
	\begin{align*}
		\mathcal{T}_1(\tilde{\zb}_0^i) = \tilde{\zb}_0^i + \nabla_{\zb}\log p_{\thetab}(\xb, \tilde{\zb}_0^i ) h + \sqrt{2h}\xi~,
	\end{align*}
	where $\xi \sim \mathcal{N}(0, \Ib)$. Consequently, the objective becomes
	\begin{align}\label{eq:obj_euc}
		\tilde{F} \triangleq \frac{1}{S}\sum_{i=1}^S&\left\|Q_{\phib}(\cdot | \xb, \omega^{\prime i}) - Q_{\phib}(\cdot | \xb, \omega^i) \right.\nonumber\\
		&\left. - \nabla_{\zb}\log p_{\thetab}(\xb, \tilde{\zb}_0^i ) h + \sqrt{2h}\xi\right\|^2~,
	\end{align}
	\eqref{eq:obj_euc} is a stochastic version of the following equivalent objective:
	\begin{align}\label{eq:obj_euc1}
	F \triangleq &\mathbb{E}_{\omega^{\prime}, \omega \sim p_0(\omega), \xi}\left\|Q_{\phib}(\cdot | \xb, \omega^{\prime i}) - Q_{\phib}(\cdot | \xb, \omega^i) \right.\nonumber\\
	&\left. - \nabla_{\zb}\log p_{\thetab}(\xb, \tilde{\zb}_0^i ) h + \sqrt{2h}\xi\right\|^2~.
	\end{align}
	There are two cases related to $\omega$ and $\omega^{\prime}$. $\RN{1})$ If $\omega$ is restricted to be equal to $\omega^\prime$, {\it e.g.}, they share the same random seed, this is the case in amortized SVGD \cite{WangL:ICLRW17} or amortized MCMC \cite{LiTL:ARXIV17}, as well as in Proposition~\ref{prop:D_euc} where Euclidean distance is adopted. $\RN{2})$ If $\Omega$ and $\Omega^\prime$ do not share the same random seed, this is a more general case, which we also want to show that it can not learn a good generator.
	
	For case $\RN{1})$, $F$ is simplified as:
	\begin{align*}
		F = \mathbb{E}_{\omega\sim p_0(\omega)}\left\|\nabla_{\zb}\log p_{\thetab}(\xb, \tilde{\zb}_0^i )\right\|^2 h^2 + \sqrt{2h}\mathbb{E}_{\xi}\|\xi\|^2~.
	\end{align*}
	Thus the minimum value corresponds to $\nabla_{\zb}\log p_{\thetab}(\xb, \tilde{\zb}_0^i ) = 0$, {\it i.e.}, $\phib$ is updated so that $\tilde{\zb}_0^i$ falls in one of the local modes of $p_{\thetab}(\zb|\xb)$. Proposition~\ref{prop:D_euc} is proved.
	
	We also want to consider case $\RN{2})$. In this case, $F$ is bounded by
	\begin{align*}
		F &\leq \mathbb{E}_{\omega^{\prime}, \omega \sim p_0(\omega)}\left\|Q_{\phib}(\cdot | \xb, \omega^{\prime i}) - Q_{\phib}(\cdot | \xb, \omega^i) \right\|^2 \nonumber\\
		&+ \mathbb{E}_{\omega\sim p_0(\omega)}\left\|\nabla_{\zb}\log p_{\thetab}(\xb, \tilde{\zb}_0^i )\right\|^2 h^2 + 2h\mathbb{E}_{\xi}\|\xi\|^2\nonumber
	\end{align*}
	The minimum possible value of the upper bound of $F$ is achieved when $Q_{\phib}$ matches all $\omega \sim p_0(\omega)$ to a fixed point $\tilde{\zb}$, and also $\nabla_{\zb}\log p_{\thetab}(\xb, \tilde{\zb}) = 0$. This is also a special mode of $p_{\thetab}(\zb|\xb)$ if exists.
	
	To sum up, by defining $\mathcal{D}$ to be standard Euclidean distance, $Q_{\phib}$ would generate samples from local modes of $p_{\thetab}(\zb|\xb)$.
\end{proof}

Now we describe how to define $\mathbb{D}$ as Wasserstein within a GAN framework. Following \cite{LiLCPCHC:NIPS17}, we define a discriminator to match the joint distributions $p(\xb, \zb_{\text{ctf}})$ (an implicit distribution) and $q_{\phi}(\xb, \tilde{\zb})$, where 
\begin{align*}
	q_{\phi}(\xb, \tilde{\zb}) &\triangleq q(\xb)q_{\phi}(\tilde{\zb}|\xb) \\
	(\xb, \zb_{\text{ctf}}) &\sim p(\tilde{\xb}, \zb), \text{ with }\zb_{\text{ctf}} = \mathcal{T}_1(\tilde{\zb})~.
\end{align*}

The graphical structure is defined in Figure~\ref{fig:D_GAN}.
\begin{figure}
	\centering
	\includegraphics[width=0.9\linewidth]{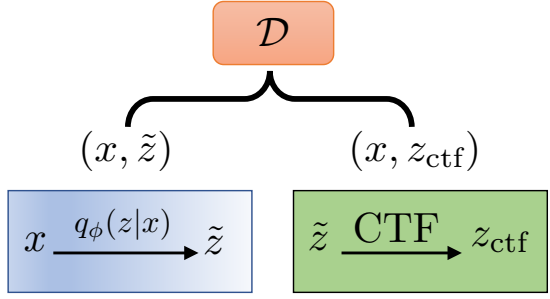}
	\caption{Implementation of $\mathcal{D}$ defined in \eqref{eq:update_phi} for distribution matching with the ALICE framework \cite{LiLCPCHC:NIPS17}.}\label{fig:D_GAN}
\end{figure}

\section{Two 2D Distributions}\label{supp:toy_dist}

$\zb=\{\zb_1, \zb_2\}$: $p(\zb) \propto e^{-U(\zb)}$. 

The first distribution is 
$$U(\zb) \triangleq \frac{1}{2}(\frac{\|\zb\| - 2}{0.4})^2 - \ln (e^{-\frac{1}{2}[\frac{\zb_2 - 4}{2.0}]^2}+e^{-\frac{1}{2}[\frac{\zb_2 + 2}{0.2}]^2})$$ 

The second distribution is 
$$U(\zb) \triangleq - \ln (e^{-\frac{1}{2}[\frac{\zb_2 - w_1(\zb) }{0.35}]^2}+e^{-\frac{1}{2}[\frac{\zb_2 - w_1(\zb) +  w_2(\zb)  }{0.35}]^2})$$ 
where
$$ 
w_2(\zb) = \sin(\frac{2\pi \ab_1}{ 4 }),~~ \text{and}~~w_2(\zb) = 3 \exp( \frac{1}{2}  
\left[ \frac{\zb_1 - 1}{ 0.6 } \right]^2 )
$$

\section{Algorithm for Density Estimation with CTFs}\label{supp:DS_CTF}

Algorithm~\ref{alg:GM} illustrates the details updates for MacGAN.

\begin{algorithm}[t]
	\caption{CTFs for generative models at the $k$-th iteration. $\mathcal{D}(\cdot, \cdot)$ is the same as \eqref{eq:update_phi}.}\label{alg:GM}
	\begin{algorithmic}
		\STATE {\bf Input:} parameters from last step $\thetab^{(k-1)}, \phib^{(k-1)}$
		\STATE {\bf Output:} updated parameters $\thetab^{(k)}, \phib^{(k)}$
		\STATE 1. Generate samples $\{\xb_{1,s}\}_{s=1}^S$ via a discretized CTF: {\small$\xb_{0,s} \sim q_{\phib^{(k-1)}}(\xb_0), \xb_{1,s} \sim \mathcal{T}_1(\xb_{0,s})$};
		\STATE 2. Update the generator by minimizing ($\{\xb_{0,s}^\prime\}_{s=1}^S$ are generated with the updated parameter $\phib^{(k)}$): 
		\vspace{-0.15cm}
		$$\phib^{(k)} = \arg\min_{\phib}\mathcal{D}\left(\{\xb_{1,s}\}, \{\xb_{0,s}^\prime\}\right)~.$$
		\vspace{-0.15cm}
		\STATE 3. Update the energy-based model $\thetab^{k}$ by maximum likelihood, with gradient as \eqref{eq:energy_grad} except replacing $\mathbb{E}_{\xb \sim p_{\thetab}(\xb)}$ with $\mathbb{E}_{\xb \sim q_{\phib}(\xb)}$;
	\end{algorithmic}
\end{algorithm}

\section{Connection to WGAN}\label{app:connection}

We derive the upper bound of the maximum likelihood estimator, which connects MacGAN to WGAN. Let $p_r$ be the data distribution, rewrite our maximum likelihood objective as
\begin{align*}
&\max \frac{1}{N}\sum_{i=1}^N \log p_{\thetab}(\xb_i) \\
&= \max \frac{1}{N}\sum_{i=1}^N \left(U(\xb_i; \thetab) - \log \int e^{U(\xb; \thetab)}\mathrm{d}\xb \right)~.
\end{align*}
The above maximum likelihood estimator can be bounded with Jensen's inequality as: 
\begin{align}\label{eq:mleupper}
&\max \frac{1}{N}\sum_{i=1}^N \log p_{\thetab}(\xb_i) \\
&\leq \max \mathbb{E}_{\xb \sim p_r}\left[U(\xb; \thetab)\right] - \log \int \frac{e^{U(\xb; \thetab)}}{q_{\phib}(\xb; \omega)}q_{\phib}(\xb; \omega)\mathrm{d}\xb \nonumber\\
\leq& \max \mathbb{E}_{\xb \sim p_r}\left[U(\xb; \thetab)\right] - \mathbb{E}_{\xb \sim q_{\phib}(\xb; \omega)} \left[\log \frac{e^{U(\xb; \thetab)}}{q_{\phib}(\xb; \omega)} \right] \nonumber \\
=& \max \mathbb{E}_{\xb \sim p_r}\left[U(\xb; \thetab)\right] - \mathbb{E}_{\xb \sim q_{\phib}(\xb; \omega)} \left[U(\xb; \thetab) \right] \\
&~~~~- \mathbb{E}_{\xb \sim q_{\phib}(\xb; \omega)} \left[\log q_{\phib}(\xb; \omega)\right]~.
\end{align}

This results in the same objective form as WGAN except that our model does not restrict $U(\xb; \thetab)$ to be 1-Lipschitz functions and the objective has an extra constant term $\mathbb{E}_{\xb \sim q_{\phib}(\xb; \omega)} \left[\log q_{\phib}(\xb; \omega)\right]$ w.r.t. $\thetab$. 

Now we prove Proposition~\ref{prop:macgan}.

\begin{proof}[Proof of Proposition~\ref{prop:macgan}]
	First it is clear that the equality in \eqref{eq:mleupper} is achieved if and only if $$q_{\phib}(\xb; \omega) = p_{\thetab}(\xb) \propto e^{U(\xb; \thetab)}~.$$
	
	From the description in Section~\ref{sec:macgan} and \eqref{eq:mleupper}, we know that $\thetab$ and $\phib$ share the same objective function, which is an upper bound of the MLE in \eqref{eq:mleupper}. 
	
	Furthermore, based on the property of continuous-time flows (or formally Theorem~\ref{theo:mse}), we know that $q_{\phib}$ is learned such that $q_{\phib} \rightarrow p_{\thetab}$ in the limit of $h \rightarrow 0$ (or alternatively, we could achieve this by using a decreasing-step-size sequence in a numerical method, as proved in \cite{ChenDC:NIPS15}). When $q_{\phib} = p_{\thetab}$, the equality in \eqref{eq:mleupper} is achieved, leading to the MLE.
\end{proof}

\section{Additional Experiments}\label{sec:additionalexp}

\subsection{Calculating the testing ELBO for MacVAE}\label{sec:cal_elbo}
We follow the method in \cite{PuGHLHC:NIPS17} for calculating the ELBO for a test data $\xb_{*}$. First, after distilling the CTF into the inference network $q_{\phib}$, we have that the ELBO can be represented as
\begin{align*}
\log p(\xb_{*}) \geq \mathbb{E}_{q_{\phib}}\left[\log p_{\thetab}(\xb_{*}, \zb_{*})\right] - \mathbb{E}_{q_{\phib}}\left[\log q_{\phib}\right]~.
\end{align*}
The expectation is approximated with samples $\{\zb_{*j}\}_{j=1}^M$ with $\zb_{*j} = f_{\phib}(\xb_{*}, \zetab_j)$, and $\zetab_j \sim q_0(\zetab)$ the standard isotropic normal. Here $f_{\phib}$ represents the deep neural network in the inference network.
Note $q_{\phib}(\zb_{*})$ is not readily obtained. To evaluate it, we use the density transformation formula: $q_{\phib}(\zb_{*}) = q_0(\zetab)\left|\mbox{det}\frac{\partial f_{\phib}(\xb_{*}, \zetab)}{\partial \zetab}\right|^{-1}$.

\subsection{Network architecture}
The architecture of the generator of MacGAN is given in Table~\ref{tab:macgan_generator}.

\begin{table*}[t]
	\caption{Architecture of generator in MacGAN}
	\label{tab:macgan_generator}
	\centering
	\begin{tabular}{ll}
		\toprule
		Output Size     & Architecture \\
		\midrule
		$100\times 1$ & $100 \times 10$ Linear, BN, ReLU     \\
		$256\times 8 \times 8$    & $512\times 4 \times 4$ deconv, 256 $5\times 5$ kernels, ReLU, strike 2, BN      \\
		$128 \times 16 \times 16$    & $256 \times 8 \times 8$ deconv, 128 $5\times 5$ kernels, ReLU, strike 2, BN      \\
		$3 \times 32\times 32$    & $128 \times 16 \times 16$ deconv, 3 $5\times 5$ kernels, Tanh, strike 2     \\
		\bottomrule
	\end{tabular}
\end{table*}

\subsection{Additional results}
Additional experimental results are given in Figure~\ref{fig:macgan_steingan_mnist} -- \ref{fig:macgan_randomwalk200}.

\begin{figure*}[!htb]
	\centering
	\includegraphics[width=0.9\linewidth]{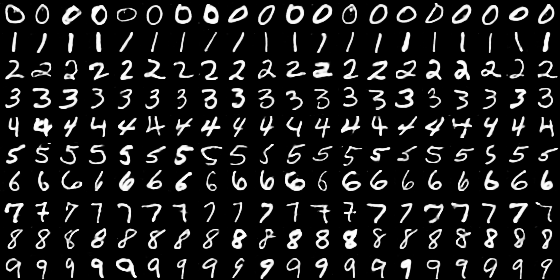}\vspace{0.3cm}
	\includegraphics[width=0.9\linewidth]{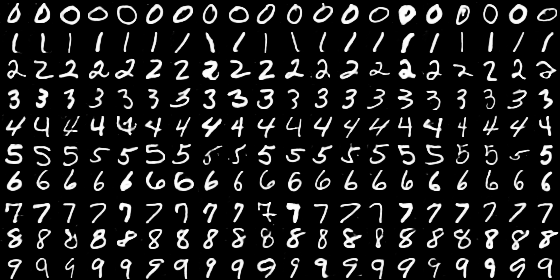}
	\vskip -0.05in
	\caption{Generated images for MNIST datasets with MacGAN (top) and SteinGAN (bottom).}
	\label{fig:macgan_steingan_mnist}
\end{figure*} 

\begin{figure*}[!htb]
	\centering
	\includegraphics[width=0.9\linewidth]{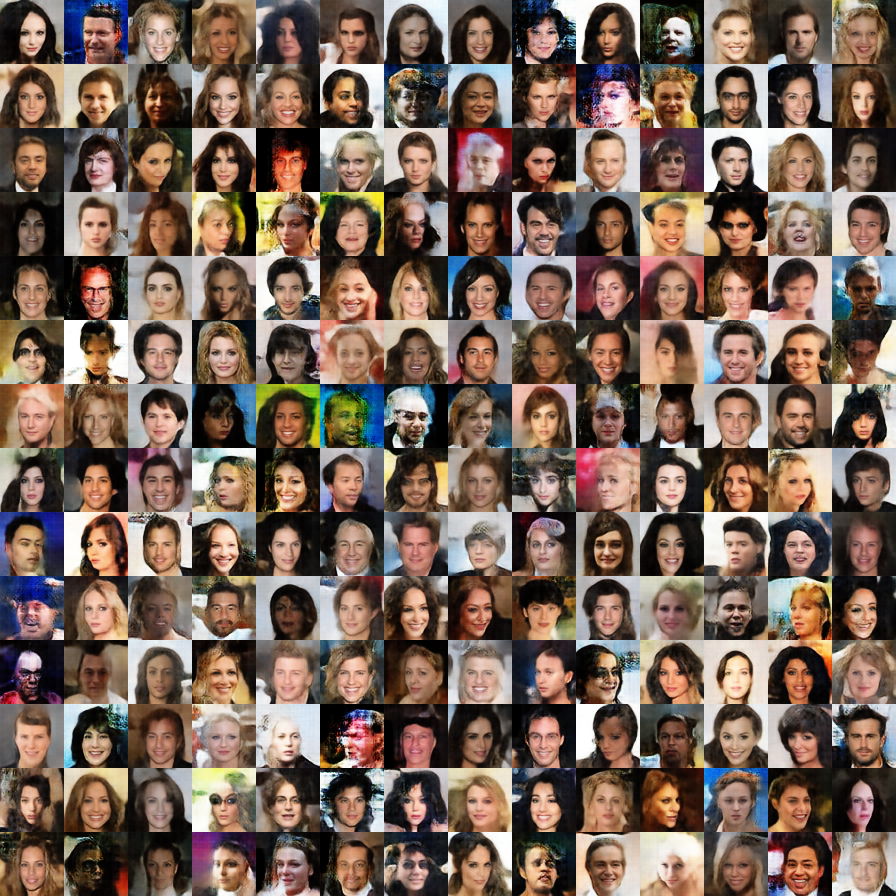}
	\vskip -0.05in
	\caption{Generated images for CelebA datasets with MacGAN.}
	\label{fig:macgan_celeb}
\end{figure*} 

\begin{figure*}[!htb]
	\centering
	\includegraphics[width=0.9\linewidth]{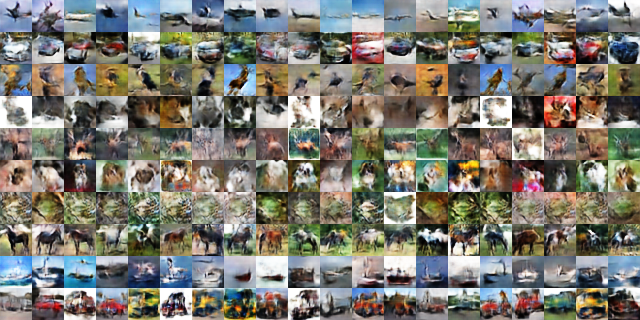}
	\vskip -0.05in
	\caption{Generated images for CIFAR-10 datasets with MacGAN.}
	\label{fig:macgan_cifar}
\end{figure*} 

\begin{figure*}[!htb]
	\centering
	\includegraphics[width=0.9\linewidth]{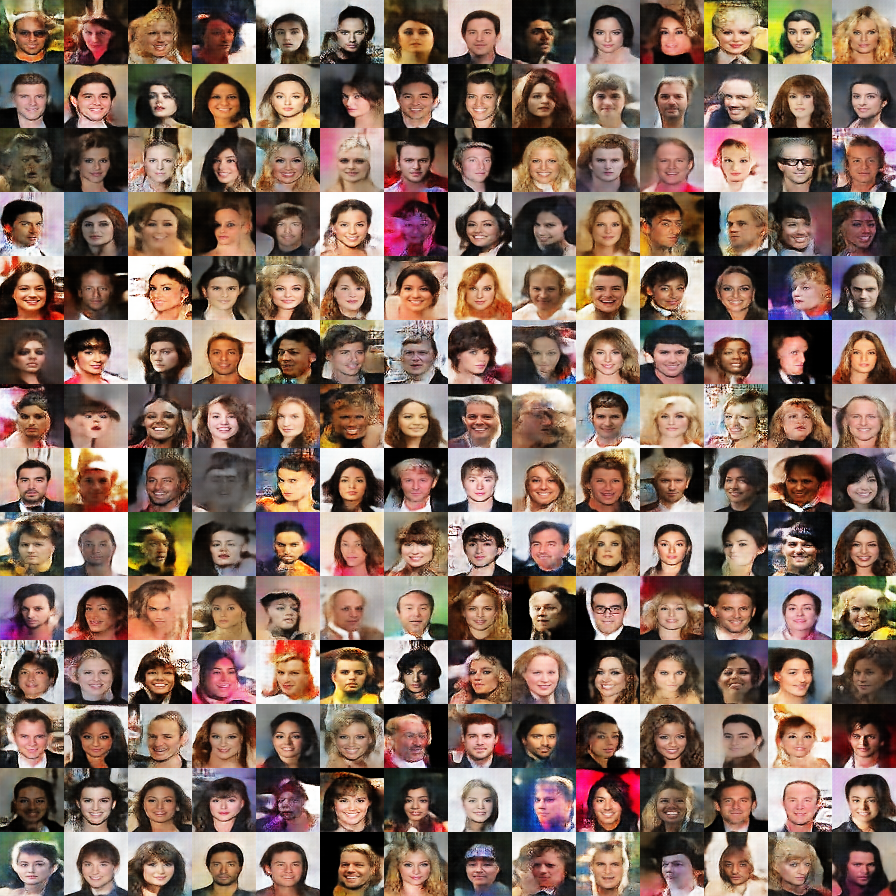}
	\vskip -0.05in
	\caption{Generated images for CelebA datasets with SteinGAN.}
	\label{fig:steingan_celeb}
\end{figure*} 

\begin{figure*}[!htb]
	\centering
	\includegraphics[width=0.9\linewidth]{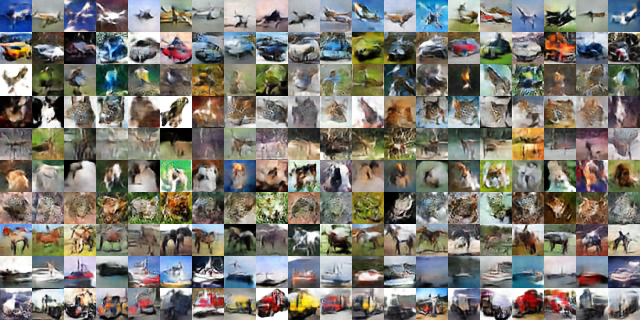}
	\vskip -0.05in
	\caption{Generated images for CIFAR-10 datasets with SteinGAN.}
	\label{fig:steingan_cifar}
\end{figure*} 

\begin{figure*}[!htb]
	\centering
	\includegraphics[width=0.7\linewidth]{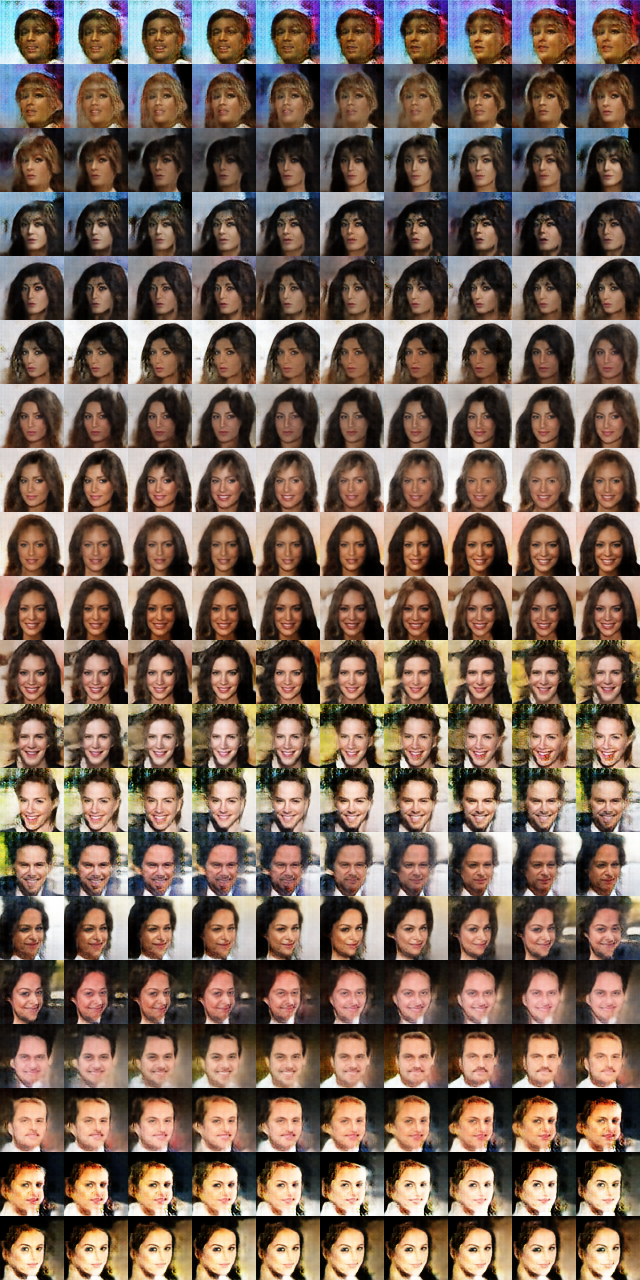}
	\vskip -0.05in
	\caption{Generated images with a random walk on the $\omega$ space for CelebA datasets with MacGAN, $\omega_t = \omega_{t-1} + 0.02 \times \mbox{rand}([-1, 1])$.}
	\label{fig:macgan_randomwalk200}
\end{figure*} 

\subsection{Robustness of the discretization stepsize}\label{supp:robust_h}
To test the impact of the discretization stepsize $h$ in \eqref{eq:ctf_sim}, following SteinGAN \cite{FengWL:UAI17}, we test MacGAN on the MNIST dataset, where ee use a simple Gaussian-Bernoulli Restricted Boltzmann Machines as the energy-based model. We adopt the annealed importance sampling method to evaluate log-likelihoods \cite{FengWL:UAI17}. We vary $h$ in $\{6e-4,2.4e-3,3.6e-3,6e-3,1e-2,1.5e-2\}$. The trend of log-likelihoods is plotted in Figure~\ref{fig:ll_h}. We can see that log-likelihoods do not change a lot within the chosen stepsize interval, demonstrating the robustness of $h$.

\begin{figure}[H]
	\centering
	\includegraphics[width=0.7\linewidth]{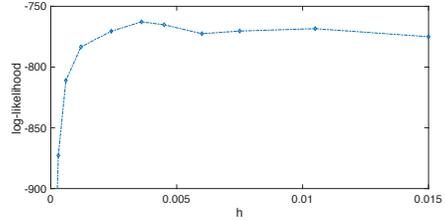}
	\vskip -0.05in
	\caption{Log-likelihoods vs discretization stepsize for MacGAN on MNIST.}
	\label{fig:ll_h}
\end{figure} 


\end{document}